\newcounter{problem}
\newenvironment{problem}[1][Problem]
  {\refstepcounter{problem}\begin{trivlist}\item[\hskip \labelsep {\bfseries #1 \theproblem.}]}
  {\end{trivlist}}
\theoremstyle{definition}
\newtheorem{definition}{Definition}[section]
\newtheorem{theorem}{Theorem}[section]
\newtheorem{lemma}[theorem]{Lemma}
\begin{document}

\title{Stable Trajectory Clustering: An Efficient Split and Merge Algorithm}

\author{Atieh Rahmani}
\affiliation{%
  \institution{Department of Computer Science and Information Technology, Institute for Advanced Studies in Basic Sciences (IASBS)}
    \city{Zanjan}
  \country{Iran}}
\email{atieh.rhmn@gmail.com}
\author{Mansoor Davoodi}
\affiliation{%
  \institution{Department of Computer Science and Information Technology, Institute for Advanced Studies in Basic Sciences (IASBS)}
  \city{Zanjan}
  \country{Iran}
}
\affiliation{%
  \institution{Center for Advanced Systems Understanding (CASUS), Helmholtz-Zentrum Dresden-Rossendorf}
  \city{Görlitz}
  \country{Germany}
}
\email{mdmonfared@iasbs.ac.ir}

\author{Justin M. Calabrese}
\affiliation{%
  \institution{Center for Advanced Systems Understanding (CASUS), Helmholtz-Zentrum Dresden-Rossendorf}
  \city{Görlitz}
  \country{Germany}
}
\affiliation{%
  \institution{Helmholtz Centre for Environmental Research – UFZ}
  \department{Department of Ecological Modelling}
  \city{Leipzig}
  \country{Germany}
}
\affiliation{%
  \institution{University of Maryland}
  \department{Department of Biology}
  \city{College Park}
  \state{MD}
  \country{USA}
}
\email{j.calabrese@hzdr.de}

\begin{abstract}
  
\noindent Clustering algorithms fundamentally group data points by characteristics to identify patterns. Over the past two decades, researchers have extended these methods to analyze trajectories of humans, animals, and vehicles, studying their behavior and movement across applications. 
\noindent This paper presents whole-trajectory clustering and sub-trajectory clustering algorithms based on DBSCAN line segment clustering, which encompasses two key events: split and merge of line segments. The events are utilized to capture object movement history based on the average Euclidean distance between line segments.
 In this framework, whole-trajectory clustering considers entire entities' trajectories, whereas sub-trajectory clustering employs a sliding window model to identify local similarity patterns. 
Many existing trajectory clustering algorithms respond to temporary anomalies in data by splitting trajectories, which often obscures otherwise consistent clustering patterns and leads to less reliable insights. To address this, we introduce the \textit{stable} trajectory clustering algorithm, which leverages the mean absolute deviation concept to demonstrate that selective omission of transient deviations not only preserves the integrity of clusters but also improves their stability and interpretability.
We evaluate all proposed algorithms on real trajectory datasets to illustrate their effectiveness and sensitivity to parameter variations. 
\end{abstract}


\begin{CCSXML}
<ccs2012>
   <concept>
       <concept_id>10010147.10010257.10010258.10010260.10003697</concept_id>
       <concept_desc>Computing methodologies~Cluster analysis</concept_desc>
       <concept_significance>500</concept_significance>
       </concept>
   <concept>
       <concept_id>10003752.10010061.10010063</concept_id>
       <concept_desc>Theory of computation~Computational geometry</concept_desc>
       <concept_significance>500</concept_significance>
       </concept>
   <concept>
       <concept_id>10002951.10003227.10003351.10003444</concept_id>
       <concept_desc>Information systems~Clustering</concept_desc>
       <concept_significance>500</concept_significance>
       </concept>
   <concept>
       <concept_id>10002951.10003227.10003236</concept_id>
       <concept_desc>Information systems~Spatial-temporal systems</concept_desc>
       <concept_significance>500</concept_significance>
       </concept>
 </ccs2012>
\end{CCSXML}

\ccsdesc[500]{Computing methodologies~Cluster analysis}
\ccsdesc[500]{Theory of computation~Computational geometry}
\ccsdesc[500]{Information systems~Clustering}
\ccsdesc[500]{Information systems~Spatial-temporal systems}

\keywords{Trajectory clustering, Whole-trajectory clustering, Sub-trajectory clustering, Stability, DBSCAN}


\maketitle

\section{Introduction}
Trajectory clustering is one of the most important data analysis methods developed for analyzing moving objects like humans, animals, and vehicles in various fields and applications. It can be used to identify common behaviors and recognize similar movement patterns, as well as to detect irregular or anomalous movements in different types of trajectory data. Among the use cases of trajectory clustering, we can mention user lifestyle profiling to capture behavioral similarities beyond geographic location~\cite{63}, exploring urban travel patterns that reflect the regularity of mobility patterns, which is an important factor in evaluating the optimality of city spatial structure and road network connectivity~\cite{7},  classifying animal behaviors (foraging, resting, walking) using high-frequency tracking data to study their environmental interactions~\cite{6} and clustering animals by species based on their daily movement patterns, which is complicated by the erratic nature of these movements~\cite{2}. Trajectory clustering is also used in air traffic categorization~\cite{3,5}, traffic network identification~\cite{4}, and weather prediction~\cite{8}.
\par
Anomalies frequently occur in trajectory clustering due to irregularities or deviations in movement patterns. To address this, numerous algorithms have been developed, each designed to effectively detect and analyze these anomalies~\cite{20,46}. 
Missing, however, is an algorithm that can ignore insignificant anomalies in trajectories to reveal clustering patterns that are otherwise stable.
Current methods usually handle these deviations in two ways, but both have limitations. The first strategy is to preprocess the raw trajectory data to remove small deviations. This is not ideal because it might delete important information and change the true nature of the trajectory. A more common strategy is to split a trajectory from its cluster as soon as a deviation appears. It often creates fragmented clusters, which hide the main movement pattern and make the results hard to understand.
In recent years, researchers have focused on stable algorithms for clustering data points that select the optimal number of clusters~\cite{18}. In other words, current stable clustering algorithms try to split and merge data points correctly in a way that is robust against random oscillations.
Inspired by advances in stable clustering of data points, we propose a stable algorithm for trajectory clustering that can be applied on existing trajectory clustering methods. In this paper, we apply the stable algorithm to a trajectory clustering approach based on line segment clustering, which naturally produces split and merge events over time. Earlier studies also employed line segment clustering, but without incorporating temporal labels or explicitly defining split and merge operations in trajectory clustering \cite{16,17}. The core contribution of our paper is the stable trajectory clustering algorithm itself, which efficiently improves cluster quality by determining when outlier trajectories can be assigned to their respective clusters despite insignificant anomalies. It addresses a prominent issue in real trajectory data that previous algorithms have not addressed. 
\par
Building upon existing trajectory clustering algorithms, we propose algorithms for both whole-trajectory clustering and sub-trajectory clustering. A trajectory can be described as a piecewise linear function, where each segment represents the movement of an object over a time interval. Consequently, for trajectory clustering, a set of line segments exists for each time interval, and these sets need to be clustered. For this purpose, Density-Based Spatial Clustering of Applications with Noise (DBSCAN) line segment clustering defines the split and merge procedures of line segments. One key advantage of this approach is that the results from clustering in each time interval can be reused in the subsequent interval, as line segments often retain their previous cluster assignment.
The clustering process relies on the continuous average Euclidean distance as the similarity measure between line segments. Therefore, a data analyst can identify the behaviors of moving objects across all time intervals and determine which objects move together and belong to the same cluster in each interval. Trajectories whose line segments belong to the same cluster and show similar behaviors in each time interval are eventually grouped into a single whole-trajectory cluster. In addition to whole-trajectory clustering, the proposed framework includes a sub-trajectory clustering algorithm that focuses on specific time intervals to identify similar sub-trajectories. It employs a sliding window model with two parameters to capture temporal patterns and discover recurrent sub-sequences.
We then present an innovative definition and algorithm for the stable trajectory clustering algorithm, which is a first-of-its-kind advancement. 
The algorithm ignores minor and insignificant deviations of trajectories (outlier trajectories) that temporarily leave their clusters and then return to them. The approach reflects the true movement patterns more accurately, which makes the clusters more meaningful and easier to interpret.
Comparable situations occur in real-world movement patterns where short departures do not change the general movement pattern.
For instance, in urban traffic analysis, a vehicle making a brief stop at a traffic light or a pedestrian crossing should still be classified within the main traffic cluster. Furthermore, a person briefly deviating from a common trajectory to look at a shop window should not be marked as an outlier in pedestrian movement studies. A related example can be seen in animal movement studies, where a migrating bird or herd of animals temporarily deviates from its migration trajectory to find food or rest, but then returns to their original trajectory afterward.
In this algorithm, we employ the mean absolute deviation concept to assess how much each outlier trajectory deviates from its assigned cluster, which enables us to distinguish significant outliers from temporary anomalies.
In the final stage, all algorithms are applied to two real-world trajectory datasets to analyze parameter sensitivity and assess the effect on the number of clusters. Three evaluation criteria are then employed to measure clustering performance before and after running the stable trajectory clustering algorithm: the Silhouette score evaluates the quality of the internal cluster, and Normalized Mutual Information (NMI), together with the Adjusted Rand Index (ARI), provides external benchmarks for comparison with the traditional and deep learning methods. The results show that the stable trajectory clustering algorithm efficiently manages insignificant anomalies to improve the quality of clusters and achieve better clustering accuracy compared to other approaches.
In summary, our contributions are illustrated as a graphical abstract in Fig. \ref{fig:100}. 
\par
The remainder of this paper is organized as follows: Section 2 covers the concepts and related algorithms for data point clustering, sub-trajectory clustering, whole-trajectory clustering, and stable data point clustering. Section 3 covers the foundational concepts and background information necessary for understanding our work, and then formally defines the problems to be solved. Section 4 addresses how whole-trajectories and sub-trajectories are clustered using the split and merge approaches. 
Section 5 introduces the principles and definitions of the stable trajectory clustering algorithm. Section 6 evaluates the algorithms and discusses their efficiency. Finally, Section 7 concludes the paper with remarks and directions for future work.
 \begin{figure}[htbp]
    \centering
    \includegraphics[width=.75\textwidth]{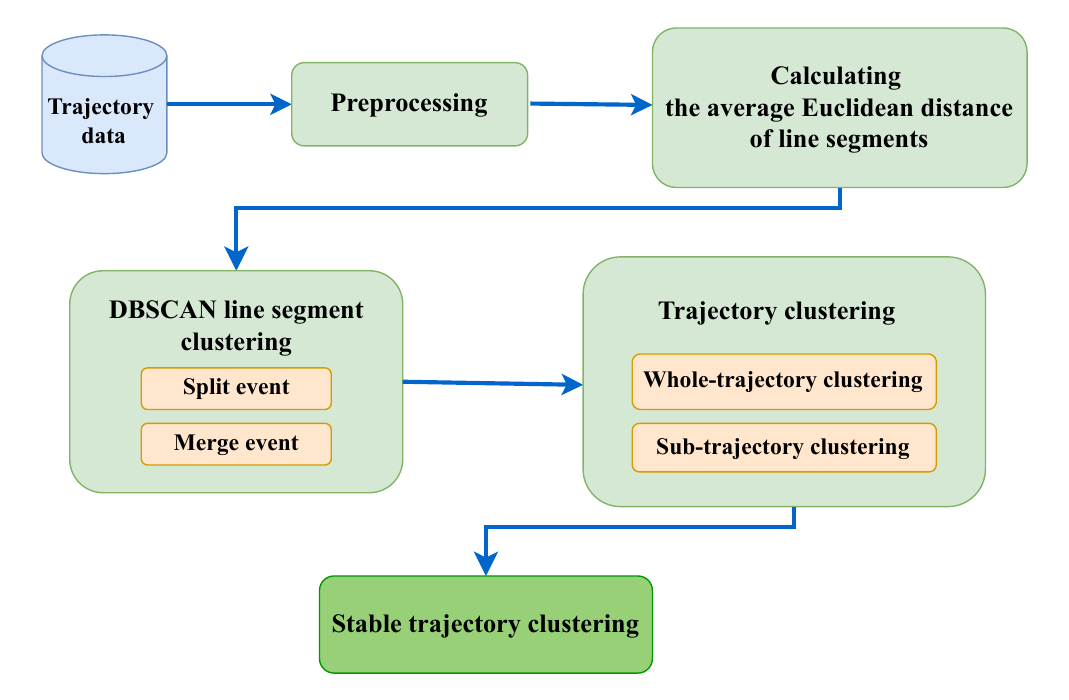}
    \caption{Graphical abstract of the proposed approaches.}
    \label{fig:100}
\end{figure}
\section{Related Work}
In the last two decades, many data point clustering algorithms have been discussed in various fields and applications such as wireless sensor networks~\cite{32}, flight anomaly detection~\cite{33}, and image processing~\cite{34}. These algorithms try to group similar data points based on their similarities or dissimilarities~\cite{23}. There are several types of methods to cluster data points, such as partitioning methods~\cite{25,24,26}, hierarchical methods~\cite{28,29}, grid-based methods~\cite{30,31}, and density-based methods~\cite{22,27}. Some of these methods are used for clustering data points as well as for trajectory clustering. For example, DBSCAN~\cite{22} is a density-based algorithm that is used for clustering both data points and trajectories. It can identify the shape and size of data, can handle noise and outliers efficiently, and does not require the number of clusters as a user-defined input. However, DBSCAN cannot identify clusters with different densities and is sensitive to the choice of user input parameters for clustering. Based on the algorithms, researchers then started developing new methods for clustering continuous movement objects, known as trajectory clustering.
\par
Trajectory data are recorded in different formats by various devices such as Global Positioning System (GPS) trackers, Radio Frequency Identification (RFID) systems, and smartphones. Numerous algorithms have been designed to cluster these different types of trajectory data. They also contribute to improving research, innovation, and theoretical understanding in various fields and applications. 
Trajectory clustering can be divided into point-based, sub-trajectory-based, and whole-trajectory-based methods. 
Point-based trajectory clustering is a method of clustering trajectories by comparing specific locations (points) along each trajectory. Instead of looking at the entire trajectory, it focuses on the individual points where the object was at different times and groups similar ones together.
Li et al.~\cite{12} offered a group discovery framework to facilitate the efficient online discovery of moving objects traveling collectively. Their framework employed a sampling-independent approach while also naturally supporting the utilization of approximate trajectories. Jensen et al.~\cite{11} investigated effectively managing a collection of constantly moving data points in a two-dimensional Euclidean space to maintain a cluster of dynamic data points. Their proposed approach can cluster moving objects incrementally. Tang et al.~\cite{10} introduced the smart-and-close discovery algorithm, which efficiently generates companions from trajectory data. In their approach, the objective of the proposed traveling buddy model was to improve the clustering and intersection processes involved in discovering companions.
Li et al.~\cite{9} recommended utilizing cluster analysis for mobile objects and introduced the notion of moving micro-clusters, which can effectively identify patterns in the mobility of objects even in massive datasets. Gao et al.~\cite{48} addressed the challenges of clustering real-time trajectory data generated by GPS-equipped devices. They proposed a novel framework that dynamically models trajectory streams using a self-adaptive k-segment method, considering both spatial and temporal dimensions. This framework supported distributed indexing and processing to handle large-scale data efficiently. It utilized incremental DBSCAN clustering for real-time processing and integrated a two-layered distributed index to enhance scalability. 
The study's key contributions included effective trajectory stream modeling, efficient and incremental clustering algorithms, and scalable distributed processing techniques. 

\par 
Sub-trajectory clustering involves breaking down a trajectory into smaller line segments and clustering them based on similarity in terms of spatiotemporal properties. It can be useful when it is not necessary to analyze the entire trajectory. In contrast, whole-trajectory clustering is more commonly used for identifying patterns over a longer period of time. Lee et al.~\cite{16} introduced a novel partition-and-group framework for clustering sub-trajectories, which involves dividing trajectories into a collection of line segments and subsequently grouping similar line segments into a cluster~\cite{13}. However, their framework does not consider temporal labels or the evolution of clusters across time, and therefore does not capture split and merge dynamics in trajectory clustering. Yu et al.~\cite{17} suggested a density-based clustering algorithm for trajectory data streams. They designed an algorithm to handle moving object trajectory data as an incremental line segment stream, using a TC-Tree structure to update clusters in real-time upon the arrival of each new segment. This is distinct from our interval-based approach, which analyzes the evolution of entire clusters between discrete time intervals. Da et al.~\cite{38} dealt with the problem of identifying sub-trajectory clusters at every time interval. To this aim, they defined a structure called a micro-group to take and maintain small groups of moving objects at each time window and presented an algorithm for maintaining each micro-group incrementally to discover patterns. Thus, they suggested a density-based sub-trajectory cluster identification approach by merging micro-groups. Buchin et al.~\cite{14} presented a linear-time algorithm for calculating the most similar sub-trajectories under equal starting times, while for unequal starting times, they offered $(1+\varepsilon)$-approximation algorithms that were not as efficient. This approach employed a distance measure defined as the average Euclidean distance at corresponding times. Mao et al.~\cite{36} used a sliding window model to propose an online algorithm for clustering streaming trajectories. This model contained a micro-clustering that represented clustering trajectory line segments in a current window and a macro-clustering based on micro-clustering over a specified time interval. They defined two data structures for tracking the latest alterations of a cluster trajectory data stream in real-time. Buchin et al.~\cite{15} also examined the problem of identifying commuting patterns in trajectories by searching for comparable sub-trajectories. They proposed multiple approximation algorithms and demonstrated that computing and approximating the longest sub-trajectory cluster was as hard as the Max-Clique problem. Liu et al.~\cite{35} defined a method for determining the distance between sub-trajectories considering time, space, and direction. They proposed a density trajectory cluster algorithm in which the cluster radius is calculated by the density of the data distribution. Schreck et al.~\cite{37} suggested a complete visual-interactive monitoring and control framework for trajectory data analysis based upon an enhanced version of the Self-Organizing Maps algorithm. Their framework allowed users to observe clustering progression visually and manage the algorithm at any desired level. Yang et al.~\cite{1} designed a trajectory clustering algorithm based on spatio-temporal density analysis of data and defined two new metrics-NMAST density function and NT factor- in this algorithm. Binh Han et al.~\cite{45} proposed a systematic method for whole-trajectory clustering of mobile objects in road networks through their framework TRACEMOB. This includes a unique distance measure that captures the complex characteristics of trajectories, an algorithm to transform these trajectories into multidimensional data points in Euclidean space, and a validation method for assessing clustering quality. The validation approach evaluates the quality of clustering in both the transformed metric space and the original road network space, ensuring comprehensive analysis. Wang et al.~\cite{21} designed a grid-based whole-trajectory clustering algorithm that converts the trajectories in the road network space into corresponding grid sequences within grid space. Their approach could determine a series of clusters as well as some abnormal trajectories and GPS points.
\par
In addition to the aforementioned methods, some studies have used deep learning for trajectory clustering. Yao et al. \cite{59} proposed a deep learning based approach to improve trajectory clustering. They used a sequence-to-sequence autoencoder to turn each trajectory into a short vector, which represents its overall movement pattern. Their method then clusters these simple vectors instead of the complex raw trajectories. An advantage of this method is that it allows the model to group trajectories with similar shapes, even if they aren't perfectly aligned in time or have different lengths.
Furthermore, Wang et al.\cite{60} introduced an end-to-end framework that learns how to represent trajectories and cluster them at the same time. Their method, called DTC, also relies on a sequence-to-sequence autoencoder that turns each trajectory into a vector. In other words, the model is trained to do two things at once: learn a good vector for each trajectory, and also encourage the vectors of similar trajectories to be closer together into clusters. The main new idea is that the model doesn’t work in two steps. Instead, it learns the representations and groups them into clusters at the same time, which helps improve the final result. Building on this end-to-end approach, Wang et al.\cite{61} proposed a more advanced framework for spatiotemporal trajectory clustering called DSTC. Their work argues that accurate clustering requires a more sophisticated representation of both space and time. To achieve this, for space, it replaces the rigid grid with a flexible, density-based approach that adapts to how data is distributed. For time, it introduces polar coordinates to correctly represent its cyclical nature. Second, it explicitly models the cyclical nature of time using polar coordinates. These improvements lead to state-of-the-art clustering accuracy. However, achieving this performance requires a much more complex model that is heavily reliant on very large datasets.
\par
Although clustering techniques have advanced, many methods still find it hard to deal with irregularities in trajectories.
Anomaly detection in trajectories involves identifying patterns and behaviors that deviate significantly from the norm~\cite{19,46,50,62}. Though the deviations are big enough to qualify as statistical outliers, they are ignorable with respect to the inference on the overall clustering pattern. 
This is what we address in our work via the stable trajectory clustering algorithm, where the trajectories remain in their respective clusters to the extent possible despite minor anomalies in a few time intervals. However, in previous studies, stable clustering algorithms refer to methods that create and maintain clusters of data points to minimize changes in cluster membership over time. These algorithms ensure efficient communication and resource utilization while considering factors such as distance and connectivity. Shahwani et al.~\cite{47} proposed a novel clustering algorithm for Vehicular Ad hoc Networks (VANETs) that enhances cluster stability by utilizing vehicle trajectories, specifically incorporating the traffic regularity of buses, into the clustering process. The algorithm used the Affinity Propagation technique, which traditionally relies on distance metrics, but modified it to also consider the shared trajectories of vehicles. By doing so, the algorithm selects Cluster Heads (CHs), which are central nodes in VANETs responsible for coordinating communication within a cluster of vehicles. These CHs not only have minimal distance from other vehicles but also share longer common trajectories with them, leading to more stable clusters. Simulation results demonstrated that this trajectory-based approach significantly improved cluster lifetime and reduced the frequency of cluster head changes compared to conventional methods, thereby enhancing the overall efficiency and reliability of VANET communications. Tseng et al.~\cite{54} suggested a stable clustering algorithm called the Clustering Algorithm using the Traffic Regularity of Buses for VANETs in urban scenarios. The algorithm leveraged the fixed routes and regular schedules of buses to improve cluster stability. It considered vehicle mobility characteristics such as velocity, position, and direction, along with the spatial and temporal dependencies of traffic patterns in urban areas. CATRB used the concept of an equilateral triangle's centroid, incenter, and circumcenter to select the most appropriate CH. This approach aimed to position the CH at the center of the cluster for efficient control and data transmission. These studies just focused on providing stable algorithms for data point clustering. While stable data point clustering is undoubtedly important, it does not fully address the complexities and dynamic nature of trajectory clustering. Our work aims to fill this gap by introducing and rigorously evaluating a stable algorithm designed for trajectory clustering.

\section{Preliminaries and Definitions}
In this section, we will start by defining key concepts such as trajectory, trajectory line segment, and sub-trajectory, which are essential for formally defining both the whole-trajectory clustering and sub-trajectory clustering problems. Then we will introduce the average Euclidean distance and explain the concepts underpinning the DBSCAN line segment clustering approach. 

\begin{definition}[Trajectory] Geometrically, a trajectory is a piecewise linear function of a moving object that is obtained by connecting geographical points that are recorded at different moments 
  $t_{1},t_{2},...,t_{T}$ by a tracking device. So, a trajectory for the $k$-th object is denoted as: 
$$\pi_{k}=(\overrightarrow{P_{k}^{1}P_{k}^{2}},\overrightarrow{P_{k}^{2}P_{k}^{3}},...,\overrightarrow{P_{k}^{T-1}P_{k}^{T}}),$$
such that each $P_{k}^{i}$ is the object's position at the $i$-th time-stamp (i.e., $P_{k}^{i}=\left(x_{k}\left(t_{i}\right),y_{k}\left(t_{i}\right)\right)$ where $x_{k}\left(t_{i}\right)$ and $y_{k}\left(t_{i}\right)$ are the longitude and latitude respectively).
\end{definition}
\begin{definition}[Trajectory line segment]
    $\overrightarrow{P_{k}^{i}P_{k}^{i+1}}$ is a trajectory line segment known as a line segment which is an approximate continuous object movement from position $P_{k}^{i}$ to $P{_k}^{i+1}$ with constant velocity and zero acceleration in a time interval $\left[t_{i},t_{i+1}\right]$.
\end{definition} 
\begin{definition}[Sub-trajectory] A sub-trajectory of trajectory 
 $\pi_{k}$ is characterized by the sub-sequence denoted as $\pi_{k}^{\left[i,j\right]}=(\overrightarrow{P_{k}^{i}P_{k}^{i+1}},\overrightarrow{P_{k}^{i+1}P_{k}^{i+2}},...,\overrightarrow{P_{k}^{j-1}P_{k}^{j}})$ where $1\leq i < j \leq T $. This time interval range is represented by $R_{p}$ with $p=\left[i,j\right]$. 

\end{definition}
\begin{problem}\label{4}
(Whole-trajectory clustering) Given $n$ trajectories as 
$\Pi=\left\lbrace \pi_{1},\pi_{2},...,\pi_{n}\right\rbrace$, 
the aim is to identify patterns and similarities among these trajectories and group them based on their spatiotemporal characteristics into clusters 
$C=\left\lbrace c_{1},c_{2},...,c_{\#cl} \right\rbrace$.
To this end, it is important to know the movement of the objects from time $t_1$ to $ t_T$. However, due to changes in the position of objects over a time interval, it is impossible to identify object movements solely based on their positions at specific moments $t_1,t_2,...,t_T$. Thus, for $n$ trajectories, there are $n$ line segments at any time interval, which should be clustered by an appropriate similarity criterion. The line segment trajectory clustering results at each time interval contribute to the whole-trajectory clustering of $\Pi$. 
\end{problem}
\begin{problem}
(Sub-trajectory clustering) \label{10}
    Let $\Pi=\left\lbrace \pi_{1},\pi_{2},...,\pi_{n}\right\rbrace$ be a set of $n$ trajectories. The purpose is to detect patterns and similarities among these trajectories in $T'(< T)$ time intervals such as $R_{1}, R_{2},..., R_{T'}$.  The resemblances can be obtained through $C_{p}^{R_{p}}= \left\lbrace c_{s_{1}},c_{s_{2}},...,c_{s_{num}}\right\rbrace$, in which $c_{s_{k}}$ is the clustering of sub-trajectories $\pi_{1}^{R_p},\pi_{2}^{R_p},...,\pi_{n}^{R_p}$ in time interval $R_{p}$. Thus, the set $C_{s}=\left\lbrace C_{1}^{R_{1}},C_{2}^{R_{2}},...,C_{T'}^{R_{T'}} \right\rbrace$ is obtained as a series of sets of clustered sub-trajectories.
\end{problem}

Although trajectory dissimilarity can be measured using various metrics (e.g., Fréchet distance, DTW \cite{58}), we adopt the average Euclidean distance because it is the most appropriate choice for piecewise-linear trajectories and fits our clustering framework, rather than being selected merely for convenience or because our algorithm only works with this metric. The distance is defined as follows:

\begin{definition}[Average Euclidean distance of line segments]\label{7} This measure should consider the speed and time of the moving object in a line segment. Hence, the dissimilarity between the two line segments \( l_{k} \) and \( l_{j} \) is defined as:
\begin{equation}\label{012}
 dist(l_k, l_j) = \frac{1}{\tau} \int_{\tau} \|P^t_k - P^t_j \| \, dt, 
\end{equation}
where \( P^t_k = (x_k(t), y_k(t)) \) and \( P^t_j = (x_j(t), y_j(t)) \) represent the positions of the objects at time \( t \) on the line segments \( l_k \) and \( l_j \), respectively. The notation \(\| \cdot \|\) denotes the Euclidean distance between the points at time \( t \), and \(\tau = t_{i+1} - t_i \) represents the time interval.
For any arbitrary \( t_{\sigma} \in [0,1] \), there is a geographical position \( P_{k}^{t_{\sigma}} \) on line segment \( l_{k} \), which is defined by the convex set as follows. This formula also holds for \( P_{j}^{t_{\sigma}} \).
\begin{equation}\label{1}
 \begin{cases}
x_{k}\left(t_{\sigma}\right)=x_{k}\left(t_{i+1}\right)\times t_{\sigma}+x_{k}\left(t_{i}\right)\times\left(1-t_{\sigma}\right),
\\
y_{k}\left(t_{\sigma}\right)=y_{k}\left(t_{i+1}\right)\times t_{\sigma}+y_{k}\left(t_{i}\right)\times \left(1-t_{\sigma}\right).
\end{cases}   
\end{equation}
\end{definition}
\begin{lemma}
The definition \eqref{012} satisfies all metric space axioms, including non-negativity, symmetry, and the triangle inequality.
\end{lemma}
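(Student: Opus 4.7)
The plan is to lift each of the three metric axioms from the pointwise Euclidean norm on $\mathbb{R}^2$ through the normalized averaging integral $\frac{1}{\tau}\int_\tau(\cdot)\,dt$. The key observation is that this integral is a positive, linear, and monotone functional, so any pointwise (in)equality satisfied by the integrand $t\mapsto \|P^t_k - P^t_j\|$ is inherited by $dist(l_k,l_j)$. The proof therefore reduces to three short invocations of known properties of $\|\cdot\|$, followed by integration.

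First, for \emph{non-negativity}, I would note that $\|P^t_k - P^t_j\|\geq 0$ for every $t\in[t_i,t_{i+1}]$, whence $dist(l_k,l_j)\geq 0$ immediately. For the identity-of-indiscernibles direction, if $l_k$ and $l_j$ coincide as line segments, then by the parameterization in equation (1) we have $P^t_k = P^t_j$ for every $t$, so the integrand vanishes identically and the distance is zero. Conversely, if $dist(l_k,l_j)=0$, then since the integrand is non-negative and continuous in $t$ (continuity follows from the linear interpolation in equation (1)), it must vanish at every $t\in[t_i,t_{i+1}]$; evaluating at $t_\sigma=0$ and $t_\sigma=1$ then forces the two endpoints to coincide, and hence $l_k = l_j$ as line segments. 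For \emph{symmetry}, the Euclidean norm already satisfies $\|P^t_k - P^t_j\| = \|P^t_j - P^t_k\|$ pointwise, and integration preserves the equality.

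For the \emph{triangle inequality}, I would take any third line segment $l_m$ defined on the same interval and apply the Euclidean triangle inequality pointwise:
$$\|P^t_k - P^t_j\|\;\leq\;\|P^t_k - P^t_m\| + \|P^t_m - P^t_j\|\qquad\text{for all } t\in[t_i,t_{i+1}].$$
Integrating both sides over the interval, dividing by $\tau>0$, and using linearity of the integral yields $dist(l_k,l_j)\leq dist(l_k,l_m)+dist(l_m,l_j)$, as desired.

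The only step that is not a one-line pointwise lift is the converse direction of the identity of indiscernibles, where I need continuity of the integrand to pass from a vanishing integral to a vanishing integrand. This is the mildest of obstacles, since continuity is immediate from the linear parameterization already introduced in Definition~\ref{7}; no further machinery (uniform continuity, measure theory, or Lebesgue differentiation) is required. I would therefore expect the final write-up to be short, with most of the length spent on the identity-of-indiscernibles argument rather than on symmetry or the triangle inequality.
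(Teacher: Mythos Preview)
Your proof is correct and takes a genuinely different route from the paper's. The paper spends most of its effort expanding the integrand via the linear parametrization in equation~\eqref{1}, writing $\|P^{t_\sigma}_k-P^{t_\sigma}_j\|$ as $\sqrt{at_\sigma^2+bt_\sigma+c}$, and then evaluating the integral in closed form; non-negativity is then read off from the algebraic sign of the discriminant, while symmetry, identity of indiscernibles, and the triangle inequality are dismissed in a single sentence as ``evident'' or ``straightforward.'' Your approach inverts this emphasis: you never compute the integral, instead observing that $\frac{1}{\tau}\int_\tau(\cdot)\,dt$ is positive, linear, and monotone, and lifting each axiom directly from the pointwise properties of the Euclidean norm. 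This is cleaner for the purpose at hand and actually more careful on the identity of indiscernibles, where you invoke continuity of the integrand to pass from vanishing integral to vanishing integrand---a step the paper glosses over. What the paper's approach buys, by contrast, is an explicit closed-form expression for $dist(l_k,l_j)$, which is not needed for the lemma itself but is presumably useful for the downstream clustering implementation.
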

\begin{proof}
To prove the non-negativity of the metric, we expand the definition as follows:

  The Euclidean distance between two moving objects $k$ and $j$ at any moment $t_{\sigma}$, is defined as:
\begin{equation}\label{2}
\begin{aligned}
    dist \left( P_{k}^{t_{\sigma}},P_{j}^{t_{\sigma}} \right) & =\sqrt{\left(x_{k}\left(t_{\sigma}\right)-x_{j}\left(t_{\sigma}\right)\right)^2 + \left(y_{k}\left(t_{\sigma}\right)-y_{j}\left(t_{\sigma}\right)\right)^2}.
    \end{aligned}
\end{equation}
Substituting the definitions of $x_{k}\left(t_{\sigma}\right)$ and $y_{k}\left(t_{\sigma}\right)$ from formula \eqref{1} into Eq. \ref{2}, we have:
 \begin{flalign*}
     dist \left( P_{k}^{t_{\sigma}},P_{j}^{t_{\sigma}} \right) & 
    =\sqrt{\left( At_{\sigma}+B\right)^2+ \left( Ct_{\sigma}+D\right)^2}\\
    & = \sqrt{A^2t_{\sigma}^2+2ABt_{\sigma}+B^2+C^2t_{\sigma}^2+2CDt_{\sigma}+D^2}\\
    &=\sqrt{\left(A^2+C^2\right)t_{\sigma}^2+\left( 2AB+2CD\right)t_{\sigma}+B^2+D^2},
\end{flalign*}
where $A=x_{k}\left(t_{i+1}\right)-x_{j}\left(t_{i+1}\right)-x_{k}\left(t_{i}\right)+x_{j}\left(t_{i+1}\right)$, $B =x_{k}\left( t_{\sigma}\right)- x_{j}\left( t_{\sigma}\right)$, $C=y_{k}\left(t_{i+1}\right)-y_{j}\left(t_{i+1}\right)-y_{k}\left(t_{i}\right)+y_{j}\left(t_{i+1}\right)$ and $D=y_{k}\left( t_{\sigma}\right)- y_{j}\left( t_{\sigma}\right)$.
Therefore, the (continuous) average Euclidean distance of two line segments, $l_{k}$ (i.e., $l_{k}=\overrightarrow{P_{k}^{i}P_{k}^{i+1}}$), and $l_{j}$ is defined as follows: 
\begin{equation} dist(l_{k},l_{j})=\frac{1}{\tau}\int_{t_{i}}^{t_{i+1}}\sqrt{\left(A^2+C^2\right)t_{\sigma}^2+\left( 2AB+2CD\right)t_{\sigma}+B^2+D^2}\,dt_{\sigma}.
\end{equation}
If we set $a=A^2+C^2,\quad b=2AB+2CD,\quad c=B^2+D^2$, then
\begin{align*}
  dist(l_{k},l_{j})&=\frac{1}{\tau}\int_{t_{i}}^{t_{i+1}}\sqrt{at_{\sigma}^2+bt_{\sigma}+c}\,dt_{\sigma}\\ 
  &=\frac{1}{\tau}\eval{\left(\frac{(2at_{\sigma}+b)\sqrt{a t_{\sigma}^2+bt_{\sigma}+c}}{4a}\right)}_{t_{i}}^{t_{i+1}}\\
  &-\eval{\frac{1}{\tau}\left(\frac{(b^2-4ac)\ln(2\sqrt{a} \sqrt{at_{\sigma}^2+bt_{\sigma}+c}+2at_{\sigma}+b)}{8a^{\frac{3}{2}}}\right)}_{t_{i}}^{t_{i+1}}.
\end{align*}

Since \( a > 0 \) and \( b^2 - 4ac < 0 \), the quadratic function \( at^2 + bt + c \) is positive definite, ensuring the metric is always non-negative. The identity of indiscernibles and symmetry are evident from the definition of the Euclidean norm in the integral. For the triangle inequality, given three line segments \( l_i \), \( l_j \), and \( l_k \), it can be shown straightforwardly as follows:
\[
d(l_{i}, l_{k}) \leq d(l_{i}, l_{j}) + d(l_{j}, l_{k}).
\] 
\end{proof}

Line segments can be clustered based on density in any time interval. In this paper, we use DBSCAN line segment clustering, which needs two user-predefined parameters, $\varepsilon$ and $MinLns$. The first one is the maximum distance between two line segments, which is used to locate the line segment in the neighborhood of any other line segment. The second one is the minimum number of line segments that need to be clustered together in a region to be considered dense. Assume that $L=\left\lbrace l_{1},l_{2},...,l_{n_{s}} 
\right\rbrace$ is the set of unclustered line segments. The principles and concepts of DBSCAN clustering for these line segments are defined as follows.
\begin{definition}[$\varepsilon$-neighborhood]
    The \textit{$\varepsilon$-neighborhood} of line segment $l_{i} \in L$ denoted by $N_{\varepsilon}(l_{i})$ and defined as $N_{\varepsilon}(l_{i})=\left\lbrace \forall l_{j} \: | \: dist(l_{i},l_{j})\leq  \varepsilon \right\rbrace, l_{j}\in L$.
\end{definition}
\begin{definition}[Core line segment]
    $l_{i}$ is a \textit{core line segment} if $ |N_{\varepsilon}(l_{i})| \geq MinLns $, where $|N_{\varepsilon}(l_{i})|$ is the size of set $N_{\varepsilon}(l_{i})$.
\end{definition}
\begin{definition}[Directly density-reachable]\label{14}
    $l_{i}$ is \textit{directly density-reachable} from a line segment $l_{j}$ if $l_{i}\: \in N_{\varepsilon}(l_{j})$ and $MinLns \leq |N_{\varepsilon}(l_{j})|$.
    \end{definition}
\begin{definition}[Density-reachable]\label{13}
        A line segment $l_{i}$ is \textit{density-reachable} from line segment $l_{j}$ if there is a chain of line segments $l_{i},l_{i+1},...,l_{j}$ such that line segment $l_{k}$ is directly density-reachable from line segment $l_{k+1}$ ($i \leq k \leq j-1$).
\end{definition}
\begin{definition}[Density-connected]\label{91}
    A line segment $l_{i}$ is \textit{density-connected} to line segment $l_{j}$ if there is a line segment $l_{k}$ such that both $l_{i}$ and $l_{j}$ are density-reachable from $l_{k}$.
\end{definition}
\begin{definition}[Borderline segment]\label{90}
    A line segment $l_{i}$ is a \textit{borderline segment} if it is not a core line segment but is directly density-reachable from a core line segment $l_{j}$.
\end{definition}
\begin{definition}[Outlier line segment]\label{12}
    A line segment $l_{i}$ is an \textit{outlier line segment} if it is neither a core line segment nor a borderline segment.
\end{definition}
\begin{definition}[Cluster of line segments]\label{CLS}
    Assume that $\Gamma$ is a non-empty subset of $L$. For $\Gamma$ to be a cluster, two conditions must be satisfied.
    \begin{enumerate}
        \item (\textit{Maximality}) For any pair of line segments $l_{i}$ and $l_{j}$, if $l_{i}       \in \Gamma$ and $l_{j}$ is density-reachable from               $l_{i}$, then $l_{j} \in \Gamma$.
        \item (\textit{Connectivity}) For any pair of line segments $l_{i}$ and $l_{j}$ which are in $\Gamma$, $l_{i}$ is density-connected to $l_{j}$. 
    \end{enumerate}
\end{definition}
The definitions above help clarify the key concepts and terminology relevant to our discussion of the stable trajectory clustering algorithm. The next section will explore how line segments can be clustered in each time interval, taking into account their history of clustering in the previous time interval. We then use the results of line segment clustering in each time interval to cluster whole-trajectories and sub-trajectories. 

\section{Clustering Scheme}
 As mentioned in previous clustering studies, the \textit{split} and \textit{merge} operations are two significant events for moving object clustering ~\cite{9,11}. In this paper, we also support this view of clustering line segments in each time interval.
 The split event for line segments involves identifying which line segments are separated from each other, while the merge event involves combining line segments that are close to each other and have similar properties. 
 
 Algorithm \ref{alg:1} outlines this trajectory clustering scheme. Segmentation is performed to simplify the analysis and processing of trajectories. The algorithm uses DBSCAN line segment clustering concepts to cluster these line segments within each time interval by the split and merge events of line segments. It checks each cluster from the previous time interval for splitting or merging. If a cluster is dense, it may be split; if clusters are mergeable, they may be combined. Finally, it utilizes the line segment clustering results for both whole-trajectory and sub-trajectory clustering. 
 
\begin{algorithm}[ht]
\newcommand\mycommfont[1]{\small\ttfamily\textcolor{black}{#1}}
\SetCommentSty{mycommfont}
\SetKwData{Left}{left}
\SetKwData{This}{this}
\SetKwData{Up}{up}
\SetKwFunction{Union}{Union}
\SetKwFunction{FindCompress}{FindCompress}
\SetKwInOut{Input}{input}
\SetKwInOut{Output}{output}
\caption{Trajectory clustering}
\label{alg:1}
\Input{$\Pi$, $ \varepsilon, MinLns$, $S$ and $W$.}
\Output{Whole-trajectory cluster set $C$ and sub-trajectory cluster set $C_{s}$. }
$m  \leftarrow T-1;$\\
divide each $\pi_{i} \in \Pi$ into $m$ parts;\\
$C_{curr} \leftarrow DBSCAN(L,\varepsilon,MinLns);$\\
$C_{hist}\left\lbrace1\right\rbrace \leftarrow \left\lbrace C_{curr}\right\rbrace;$\\
\For{$t$ from 2 \KwTo $m$}
{
$C_{new} \leftarrow \emptyset;$\\

\ForEach{$c_i \in C_{curr}$}
{
$C_{new}$.union$\big( |c_i| > 1 \,?\, \text{Split}(c_i, \varepsilon, MinLns):c_i \big);$
}
\ForEach{pairs of $c_{i}$ and $c_{j}$ in $C_{new}$}
{
\If{$isMergeable(c_{i}, c_{j})$}
{$C_{mrg} \leftarrow $Merge$(c_{i}, c_{j},\varepsilon,MinLns));$ \\
\If{$C_{mrg}\neq \emptyset$}
{remove $c_{i}$ and $c_{j}$;\\ $C_{new}$\{end+1\}$ \leftarrow C_{mrg}$;
}
}
}
$C_{curr} \leftarrow C_{new}; $\\
$ C_{hist}\left\lbrace t\right\rbrace \leftarrow \left\lbrace C_{curr}\right\rbrace;$
}
$C \leftarrow $ wholeTraj($C_{hist},m,\Pi$);\\
$C_{s} \leftarrow $ subTraj($C_{hist},m,\Pi,S,W$);\\
\Return $C,\: C_{s};$
\SetAlgoLined
\end{algorithm}
The details of this algorithm are as follows: Firstly, the preprocessing should be run on data trajectories to divide each trajectory into $m$ parts (lines 1 and 2). Then, in the first time interval, for obtaining the first clusters, DBSCAN line segment clustering should be executed (line 3). From the second time interval to the last time interval (lines 5-16), if any cluster of line segments exists, the Split function (Algorithm \ref{alg:2}) is called (line 8). Two outputs are obtained: either the segments remain in their cluster, or they are split from each other. The Merge function (Algorithm \ref{alg:3}) should then be executed based on these results (lines 9-15). Eventually, the results of line segment clustering are utilized as inputs for the whole-trajectory clustering (Algorithm \ref{alg:4}) and the sub-trajectory clustering (Algorithm \ref{alg:5}) algorithms (lines 17 and 18). 
Here, in the first time interval, the initialization cost is $\mathcal{O}(n^2)$ (or $\mathcal{O}(n\log n)$ with spatial indexing). For the remaining $m$ time intervals, let $\mathcal{C}_{split}$ and $\mathcal{C}_{merge}$ denote the computational costs of the split and merge operations, respectively, and let $k$ be the number of clusters in each interval. Thus, the total worst-case complexity for Split-Merge part (lines 1-16) is $\mathcal{O}(n^2 + m(k \cdot \mathcal{C}_{split} + k^2 \cdot \mathcal{C}_{merge}))$.
\subsection{Split of Line Segments}
According to definition \ref{CLS}, let $C_{s}$ be a cluster containing $ n_{s}$ line segments in the time interval $\left[t_{i-1},t_{i}\right]$, where $n_{s}\leq n$. In the following, we will provide a definition and explanations regarding the split of the cluster.
\begin{definition}[Split event]
    Cluster $C_{s}$ will be split into at least two non-empty sub-clusters $c_{s_{1}}$ and $c_{s_{2}}$ in the next time interval $\left[t_{i},t_{i+1}\right]$ if no line segment in $c_{s_{1}}$ is density-reachable from any line segment in  $c_{s_{2}}$. In other words, if we have two sub-clusters $c_{s_{1}}$ and $c_{s_{2}}$, and for every line segment $l_{i}$ in $c_{s_{1}}$ and every line segment $l_{j}$ in $c_{s_{2}}$, there is no chain of density-reachable line segments connecting $l_{i}$ to $l_{j}$, then $l_{i}$ and $l_{j}$ are not in the same cluster. Therefore, $c_{s_{1}}$ and $c_{s_{2}}$ are considered split sub-clusters.
\end{definition}
 Suppose that $UC_{s}$ is a set of line segments in the time interval $\left[t_{i},t_{i+1}\right]$ that have previously been in cluster $C_{s}$. To detect whether the cluster $C_{s}$ will be split in the next time interval or not, a random line segment in $UC_{s}$ should be selected initially, and then its neighboring line segments are determined based on their average Euclidean distance as sub-set $c_{s_{1}}$. The same process is repeated for the remaining line segments in $UC_{s}$ until all line segments are examined, and no line segments are remaining. So, there is a set of sub-sets $NC_{s}=\left\lbrace c_{s_{1}},c_{s_{2}},...,c_{s_{num}}\right\rbrace $ and we need to determine whether the members of each sub-set can be considered neighbors of each other or not. If there is no chain of line segments that connects two sub-sets, they are considered split from each other; otherwise, they are connected. So, the $NC_{s}$ should be updated such that its members are the new sub-sets where, according to definition \ref{CLS}, each sub-set satisfies the Maximality criterion. However, for each subset to be a cluster, we also need to check for Connectivity. Consequently, we have sub-sets that are either considered as a sufficiently dense cluster or are recognized only as a low-density cluster of $NC_{s}$ due to violating the Connectivity criterion. It is noteworthy that it is possible to preprocess the line segments before using formula \eqref{7}, meaning we can measure only the Euclidean distance between the start and end points of the line segments without using the average Euclidean distance of line segments. Therefore, the following cases occur between the start and end points of each pair of line segments in $UC_{s}$ at the time interval $\left[t_{i},t_{i+1}\right]$:
\begin{itemize}
    \item If the Euclidean distance of two start points and two endpoints is less than or equal to $\varepsilon$, both line segments are in the same sub-set and it is not necessary to calculate the average Euclidean distance of line segments.
    \item If the Euclidean distance of start points, endpoints, or both of them is more than  $\varepsilon$, calculating the average Euclidean distance of line segments is necessary.
\end{itemize}

Algorithm \ref{alg:2} provides the pseudocode for the split event used in trajectory clustering. The algorithm takes a set of densely clustered line segments in a time interval and determines whether these line segments remain connected or are split from each other in the next time interval. If they are split, the resulting sub-clusters are then evaluated to check if they are dense enough. This involves checking the distances between line segments to identify neighbors, ensuring maximal connectivity between clusters, and assessing the density of the clusters based on the presence of core line segments.
\begin{algorithm}[ht]
\newcommand\mycommfont[1]{\small\ttfamily\textcolor{black}{#1}}
\SetCommentSty{mycommfont}
\SetKwData{Left}{left}
\SetKwData{This}{this}
\SetKwData{Up}{up}
\SetKwFunction{Union}{Union}
\SetKwFunction{FindCompress}{FindCompress}
\SetKwInOut{Input}{input}
\SetKwInOut{Output}{output}
\caption{Split}
\label{alg:2}
\Input{Line segments set $UC_{s}=\left\lbrace l_{1},l_{2},...,l_{n_{s}}\right\rbrace$ in time interval $\left[t_{i-1},t_{i}\right]$, $\varepsilon$, $MinLns$.}
\Output{New non-empty sub-clusters $NC_{s}=\left\lbrace c_{s_{1}},c_{s_{2}},...,c_{s_{ncl}}\right\rbrace$ in time interval $\left[ t_{i},t_{i+1}\right]$. }
\tcp{Neighborhood analysis}
\ForEach{line segment $l_{i} \in UC_{s}$}
{$c_{s_{i}} \leftarrow \left \lbrace l_{i}\right\rbrace $;\\
\ForEach{line segment $l_{j} \in UC_{s}$}
{
\If{$dist(l_{i},l_{j}) \leq \varepsilon$}
{$c_{s_{i}}$.add$(l_{j})$;\\ }
$NC_{s} $.add($c_{s_{i}}) $\;
}
}
\tcp{\textbf{Maximality}} 
\ForEach{pair of $c_{s_{i}}$ and $c_{s_{j}}$ in $NC_{s}$ }{
  \text{separated-clusters} $\leftarrow$ \text{true};\\
  \ForEach{$l_{s_{i}} \in c_{s_{i}} $ }{
    \ForEach{$l_{s_{j}} \in c_{s_{j}} $ }{
      \If{$dist(l_{s_{i}},l_{s_{j}})\leq \varepsilon$}{
        \text{separated-clusters} $\leftarrow$ \text{false};\\
        \text{$c_{s_{i}}$ and $c_{s_{j}}$ are connected};\\
        goto end-loop;\\
      }
    }
  }
 {end-loop:}\\
  \If{\text{separated-clusters} is true}{\text{$c_{s_{i}}$ and $c_{s_{j}}$ are split};}
} 
\tcp{\textbf{Connectivity}}
\ForEach{$c_{s_{i}} \in NC_{s}$}
{\eIf{‌$c_{s_{i}}$ contains at least one core line segment}
{$c_{s_{i}}$ is a dense enough cluster;\\}{$c_{s_{i}}$ is an outlier or low-density cluster;\\}}
\Return $NC_{s}$\;
\SetAlgoLined
\end{algorithm}

The detailed steps of this algorithm are as follows: the algorithm finds the neighboring line segment for each line segment in $UC_{s}$ and 
creates clusters without determining whether they are sufficiently dense or not (lines 2-7). The algorithm then checks the Maximality (lines 9-19)
and Connectivity (lines 21-25) criteria, as mentioned in definition \ref{CLS}. The time complexity for checking the distances between all line segment pairs in $UC_s$ is
 $\mathcal{O}(|UC_s|^2)$.
\subsection{Merge of Line Segments}
Let $c_{s_{1}}$ and $c_{s_{2}}$ be two independent low-density clusters or sufficiently dense clusters of line segments that comprise at most $n_{s_{1}}$ and $n_{s_{2}}$ line segments, respectively, where $n_{s_{1}}+n_{s_{2}}\leq n$. The following definition explains how dense clusters, low-density clusters, or a combination of both can be merged. 

\begin{definition}[Merge event]
Two clusters $c_{s_{1}}$ and $c_{s_{2}}$ are merged
if there are at least two line segments $l_{i} \in c_{s_{1}}$ and $l_{j} \in c_{s_{2}}$ such that the average Euclidean distance of $l_{i}$ and $l_{j}$ is less than or equal to $\varepsilon$.
\end{definition}
The following scenarios may occur during the merge event.
\begin{itemize}
    \item Merging outlier line segments or low-density clusters.
    \item Merging outlier line segments or low-density clusters with dense clusters.
    \item Merging dense clusters with each other.
\end{itemize}
To identify the two line segments that lead to the merge of two clusters, a simple search is conducted between the elements of the two clusters. The search stops as soon as two line segments are found that have an average Euclidean distance of less than or equal to $\varepsilon$. 
The merge event may decrease the number of clusters, however, and determining whether new clusters have the required density depends on the user input parameters set for the DBSCAN clustering algorithm. 
 
Algorithm \ref{alg:3} merges two clusters of line segments based on a proximity threshold and evaluates whether the resulting cluster is dense. The density is determined by the presence of core line segments and the total number of line segments in the merged cluster.


\begin{algorithm}[ht]
\newcommand\mycommfont[1]{\small\ttfamily\textcolor{black}{#1}}
\SetCommentSty{mycommfont}
\SetKwData{Left}{left}
\SetKwData{This}{this}
\SetKwData{Up}{up}
\SetKwFunction{Union}{Union}
\SetKwFunction{FindCompress}{FindCompress}
\SetKwInOut{Input}{input}
\SetKwInOut{Output}{output}
\caption{Merge}
\label{alg:3}
\Input{Two non-empty low-density or dense enough clusters $c_{m_{1}}$ and $c_{m_{2}}$, 
$\varepsilon$, $MinLns$.}
\Output{Cluster $c_{m_{1}m_{2}}$. 
}
\ForEach{line segment $l_{i}$ in $c_{m_{1}}$}
{
\ForEach{line segment $l_{j}$ in $c_{m_{2}}$}
{
\If{$dist(l_{i},l_{j})\leq \varepsilon$}
{$c_{m_{1}m_{2}} \leftarrow c_{m_{1}} \cup c_{m_{2}} $\; goto Connectivity; \\}
}
}
\tcp{\textbf{Connectivity}}
\eIf{$|c_{m_{1}m_{2}}|=|c_{m_{1}}|+|c_{m_{2}}|$}
{
\eIf{$c_{m_{1}m_{2}}$ contains at least one core line segment }
{$c_{m_{1}m_{2}}$ is a dense enough cluster;}
{$c_{m_{1}m_{2}}$ is a low-density cluster;}
\Return $c_{m_{1}m_{2}};$}
{\Return $\emptyset$\;}
\SetAlgoLined
\end{algorithm}

In this algorithm, two sufficiently dense or two non-empty low-density clusters are given as inputs. Then it is checked which two line segments can connect two clusters (lines 1-6). If two clusters are connected, then the Connectivity should be checked (lines 7-14).
It is noteworthy that in some time intervals, both split and merge events occur.
The Split event never executes on the outlier line segments. Instead, only the Merge event is applied to them, which alters the role of the line segments in the clusters they join. In the worst case, the algorithm performs $\mathcal{O}(|cm_1| \cdot |cm_2|)$ comparisons.
 
\subsection{Whole-trajectory Clustering and Sub-trajectory Clustering}

In whole-trajectory clustering, the algorithm considers the entire trajectory of an entity. In other words, features like average Euclidean distance and constant velocity of moving objects in each time interval hold significance for clustering. With this in mind, we can use the clustered line segment information obtained by the split and merge events which we discussed in previous sections. Thus, based on definition \ref{4}, two whole trajectories $\pi_{i}$ and $\pi_{j}$ can be assigned to the same cluster if their line segments consistently belong to the same cluster; either they are neighbors of each other directly or indirectly across all time intervals (definitions \ref{13} and \ref{14}). 

Algorithm \ref{alg:4} provides whole-trajectory clustering by comparing trajectories at multiple time intervals to identify similarities. It counts the number of time intervals during which the line segments of each pair of trajectories are in the same cluster. If this count matches the total number of time intervals, the trajectories in the pair are considered similar because they consistently follow similar paths over time. The algorithm constructs a joint or disjoint graph that represents these similar relationships between trajectories and returns each connected component in this graph as a cluster of similar trajectories.

\begin{algorithm}[ht]
\newcommand\mycommfont[1]{\small\ttfamily\textcolor{black}{#1}}
\SetCommentSty{mycommfont}
\SetKwData{Left}{left}
\SetKwData{This}{this}
\SetKwData{Up}{up}
\SetKwFunction{Union}{Union}
\SetKwFunction{FindCompress}{FindCompress}
\SetKwInOut{Input}{input}
\SetKwInOut{Output}{output}
\caption{Whole-trajectory clustering}
\label{alg:4}
\Input{$C_{hist}$, $m$, $\Pi$.}
\Output{$C=\left\lbrace
c_{1},c_{2},...,c_{\#cl}\right\rbrace$.}
$\mathcal{A}_{\mathtt{traj}} \leftarrow \mathbb{O}_{m \times m};$\\
\ForEach{trajectory $\pi_{i} \in \Pi$ }
{
$counter \leftarrow 0 ;$\\
\ForEach{trajectory $\pi_{j} \in \Pi$ where $j>i$}
{\For{t from 1 \KwTo $m$}
{
\For{k from 1 \KwTo $|C_{hist}\lbrace t \rbrace|$}
{\If{$l_{\pi_{i}}$ and $l_{\pi_{j}} \in C_{hist}\lbrace t \rbrace \lbrace k \rbrace$ }
{$counter \leftarrow counter+1;$}
}
}
$\mathcal{A}_{\mathtt{traj}} (i,j) \leftarrow (counter=m)\; ? \; 1 : 0;$
}
}
$C\leftarrow$ connComp(makeGraph($\mathcal{A}_{\mathtt{traj}} $));\\
$\Return \;C;$
\SetAlgoLined
\end{algorithm}

In this algorithm, a matrix is used to determine which whole trajectories are in the same cluster in all time intervals (lines 4-9). Subsequently, the graph structure is employed to determine the whole-trajectory clusters. Each graph represents a cluster of whole trajectories. Besides that, a node corresponds to a trajectory, and the edge between two nodes signifies their neighborhood (line 10). The time complexity of the algorithm is $\mathcal{O}(n^2 \cdot m)$, because it compares the histories of line segment clusters for every trajectory pair over $m$ time intervals. Additionally, standard DFS or BFS computes the connected components of the resulting graph in $\mathcal{O}(n + n^2)$ time. 
\par
For sub-trajectory clustering, we use the sliding window model. This model is appropriate when we aim to detect specific ranges within a given dataset and assess the patterns, trends, or deviations within those ranges ~\cite{41,42,43}. As mentioned in the definition \ref{10}, the goal of sub-trajectory clustering is to recognize the time interval ranges like $R_{p}$ and cluster the sub-trajectories in this range as $C_{p}^{R_{p}}$. Utilizing the sliding window model as a tool can facilitate the process of finding these time interval ranges more easily. Therefore, we should first set the size of the sliding window and identify how many steps we want to move it. We consider two parameters $W$ and $S$ as the sliding window size and the movement step forward, respectively, which depend on the time interval range. To give an illustration, if each time interval range is considered 5 seconds, for $W=3$ and $S=1$, the sliding window encompasses the data of 3 consecutive time intervals, which are 15 seconds, and moves forward 5 seconds for each step (Fig. \ref{fig:5}). The window moves across all time intervals and according to the window size, contains the line segment clusters obtained in the previous section by the split and the merge events. In the current window, the consecutive time intervals that incorporate similar line segment clusters are determined as sub-trajectory clusters. Based on the parameter $S$, moving the sliding window either combines similar sub-trajectory clusters with the previous ones or creates new sub-trajectory clusters. If $S<W$, then the consecutive windows contain common time intervals and line segment clusters.

Algorithm \ref{alg:5} describes sub-trajectory clustering. 
It processes pre-clustered line segments to identify and cluster similar sub-trajectories. It employs a sliding window approach, iterating over the line segments according to parameters $S$ and $W$. For each window, sub-trajectories are extracted and clustered, with the results saved incrementally. After processing all windows, consecutive clustering results are compared to identify similar and different ranges. Continuous ranges of similar results are merged, while different ranges are noted separately. The algorithm then compiles these identified ranges into a final list of clustered sub-trajectories, resulting in a set of clusters that represent similar segments within the overall trajectory data. 

\begin{algorithm}[ht]
\newcommand\mycommfont[1]{\small\ttfamily\textcolor{black}{#1}}
\SetCommentSty{mycommfont}
\SetKwData{Left}{left}
\SetKwData{This}{this}
\SetKwData{Up}{up}
\SetKwFunction{Union}{Union}
\SetKwFunction{FindCompress}{FindCompress}
\SetKwInOut{Input}{input}
\SetKwInOut{Output}{output}
\caption{Sub-trajectory clustering}
\label{alg:5}
\Input{$C_{hist}$, $m$, $S$ and $W$.}
\Output{$C_{s}=\left\lbrace C_{1}^{R_{1}},C_{2}^{R_{2}},...,C_{m}^{R_{m}} \right\rbrace$.}
\For{i from 1 to $|C_{hist}|$+W-1 by step S}
{
$W_{s} \leftarrow i;$
$W_{e} \leftarrow i+W-1;$\\
\For{j from 1 to $|\Pi|$}
{$\Pi_{new}$.add$(\pi_{j}(W_{s}:W_{e}));$}
$C_{new} \leftarrow C_{hist} \lbrace W_{s}:W_{end}-1\rbrace;$\\
$p\leftarrow[W_{s},W_{e}];$\\
$C_{i}^{R_{p}}\leftarrow$ wholeTraj$(C_{new},R_{p},\Pi_{new})$;\\
$W_{temp}\lbrace i\rbrace \leftarrow C_{i}^{R_{i}}; $\\
}
$W_{move}\leftarrow |W_{temp}|;$\\
\For{j from 1 \KwTo $W_{\text{move}}-1$}{
  $\mathcal{B}_{\text{sim}} \gets (W_{\text{temp}}\{j\} = W_{\text{temp}}\{j+1\}) \; ? \; 1 : 0$\;
}
$C_{s} \leftarrow \emptyset;$\\   
\ForEach{$i \in \{i' \mid \mathcal{B}_{\text{sim}}[i'] = 0\}$} 
{
    $W_{s} \leftarrow (i-1) \times S + 1;$
    $W_{e} \leftarrow W_{s} + W;$ \\
    $p \leftarrow [W_{s}, W_{e}];$ \\
    $C_{s}.\text{add}(C_{p}^{R_{p}});$ 
}
$\mathcal{I}_{\text{sim}}\leftarrow$findContinuousOnes$(\mathcal{B}_{\text{sim}});$\\
\For{i from 1 to $|\mathcal{I}_{\text{sim}}|$}
{
$e\leftarrow $lastElement$(\mathcal{I}_{\text{sim}}\{i\})$;\\
$W_{s}\leftarrow($firstElement$(\mathcal{I}_{\text{sim}}\lbrace i\rbrace)-1)\times S+1;$\\
$W_{e} \leftarrow   ((e-1)\times S+1)+W;$\\
$p \leftarrow [W_{s},W_{e}];$\\
$C_{s}$.add$(C_{p}^{R_{p}});$
} 

\Return $C_{s};$
\SetAlgoLined
\end{algorithm}
The detailed steps of this algorithm are as follows: Firstly, based on the parameters $S$ and $W$, the trajectories within each window are clustered (lines 1-8). Then, each pair of continuous windows should be compared (lines 10 and 11).  
Continuous ranges with identical results are therefore merged and identified as a single range in which the sub-trajectories all have the same patterns (lines 13-23). The time complexity is $ \mathcal{O}\left( \frac{m}{S} \cdot (n^2 \cdot W) \right) $, because the algorithm repeats the whole-trajectory clustering process on a window of size $W$ for approximately $m/S$ steps.

Both whole-trajectory and sub-trajectory clustering follow the same procedures, utilizing the results of the split and merge events of line segment clustering in each time interval.
Nevertheless, while the whole-trajectory clustering algorithm focuses on patterns and similarities across entire trajectories, sub-trajectory clustering examines specific parts of trajectories via a sliding window.

\section{Stable Trajectory Clustering Algorithm}\label{sec:stb}
In this section, we address the concept of stability and how it can be applied in the proposed trajectory clustering algorithms.

\begin{definition}[Stable trajectory clustering]\label{9}
To further enhance the results of trajectory clustering, we can evaluate the outlier trajectories, which are the trajectories of 
 moving objects that deviate from their respective clusters for a few intervals and create anomalies. Such anomalies may be caused by noise in GPS trackers~\cite{20} or conditions that are dependent on the nature of the object's movement. Sometimes minor deviations are often inconsequential and can be disregarded. 
 Whether or not deviations should be ignored depends on the number of time intervals over which the objects’ trajectories remain split from their clusters. Additionally, the magnitude of their deviation from the clusters within these time intervals is important. Indeed, we aim to determine whether, despite the anomaly causing a trajectory to split from its cluster temporarily, it still belongs to the same cluster, or if this anomaly represents a genuine split.

As an example, the trajectories of six moving objects from top to bottom in different colors are shown in Fig. \ref{fig:1}. There are two vertical and horizontal crossed corridors in this example. Due to the mid-trajectory deviations where objects briefly enter the horizontal corridor, the sub-trajectory clustering and whole-trajectory clustering produce fragmented results, and split the trajectories into five and three clusters, respectively (Fig. \ref{fig:2} and Fig. \ref{fig:3}). 
In contrast, the stability of the trajectory can mitigate the impact of movements in the horizontal corridor, and different results may be achieved, as shown in Figs. \ref{fig:4} and \ref{fig:5}, respectively.
 \end{definition}

\begin{figure*}[htbp]
    \centering
    \begin{subfigure}{0.19\textwidth}
        \centering
        \includegraphics[width=\linewidth]{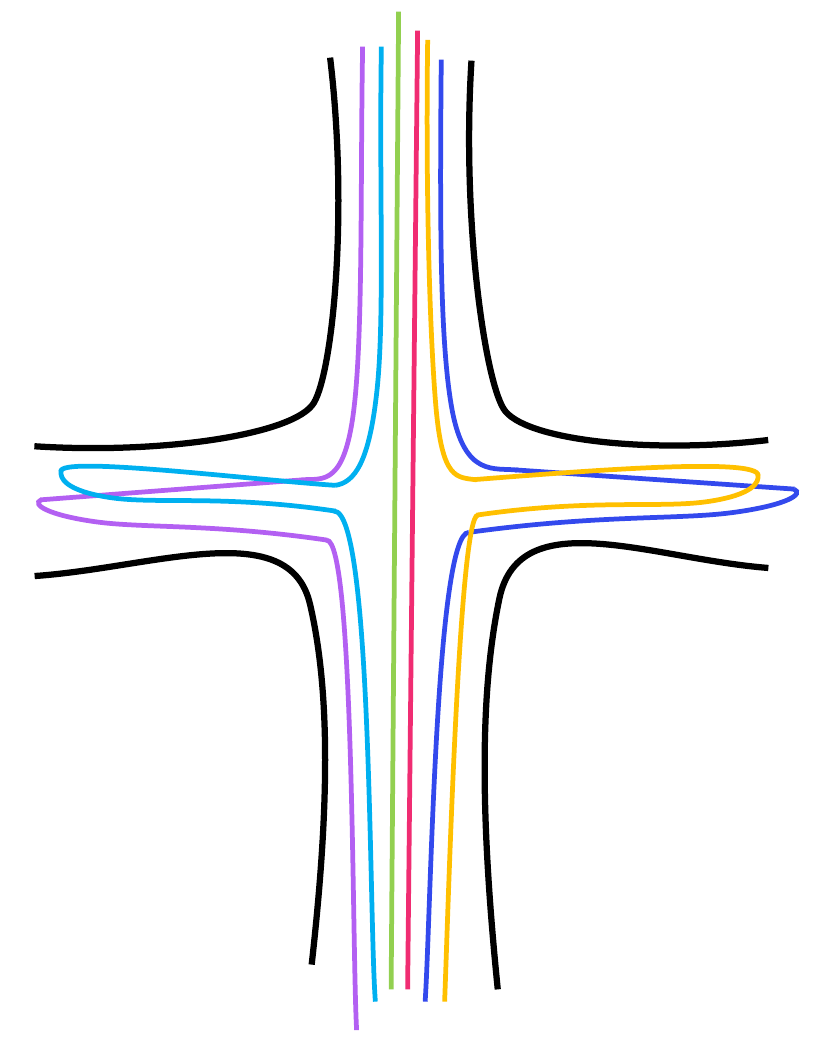}
        \caption{}
        \label{fig:1}
    \end{subfigure}
    \hfill
    \begin{subfigure}{0.19\textwidth}
        \centering
        \includegraphics[width=\linewidth]{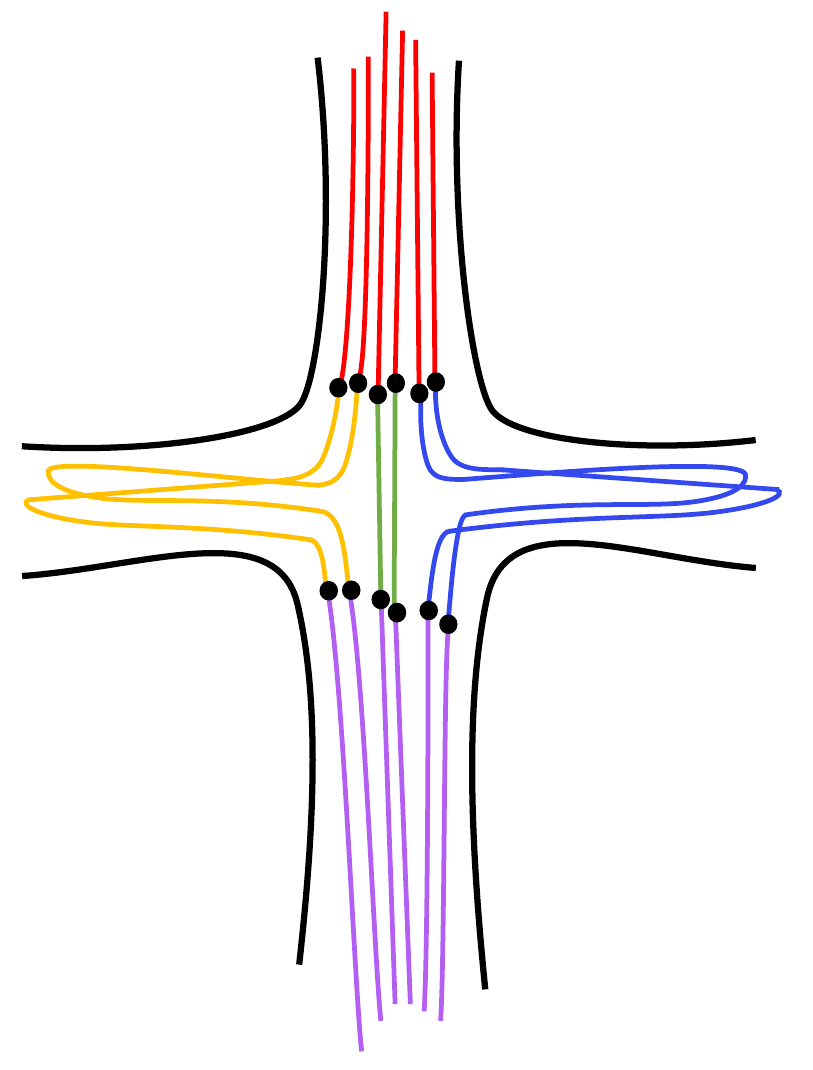}
        \caption{}
        \label{fig:2}
    \end{subfigure}
    \hfill
    \begin{subfigure}{0.19\textwidth}
        \centering
        \includegraphics[width=\linewidth]{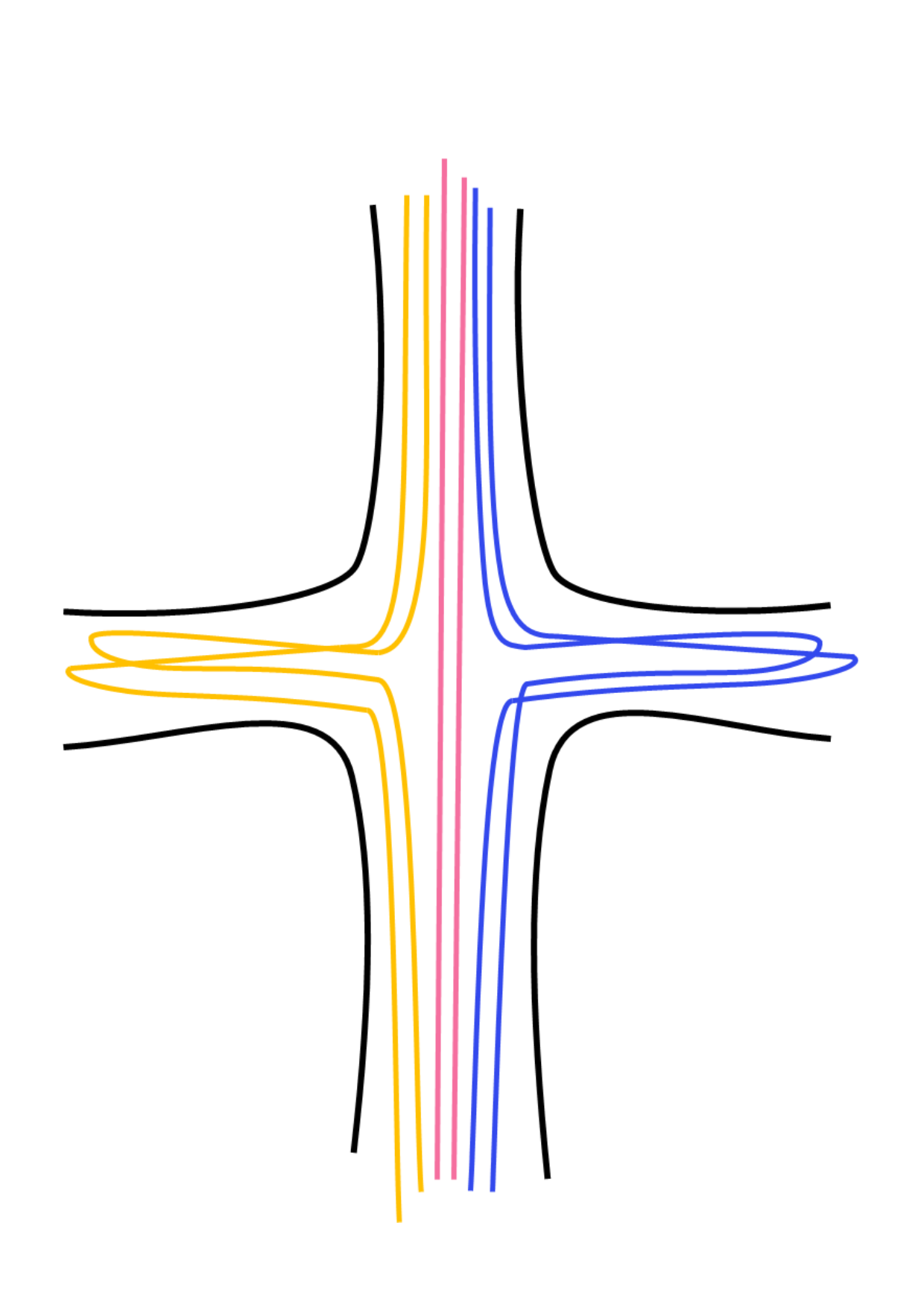}
        \caption{}
        \label{fig:3}
    \end{subfigure}
    \hfill
    \begin{subfigure}{0.19\textwidth}
        \centering
        \includegraphics[width=\linewidth]{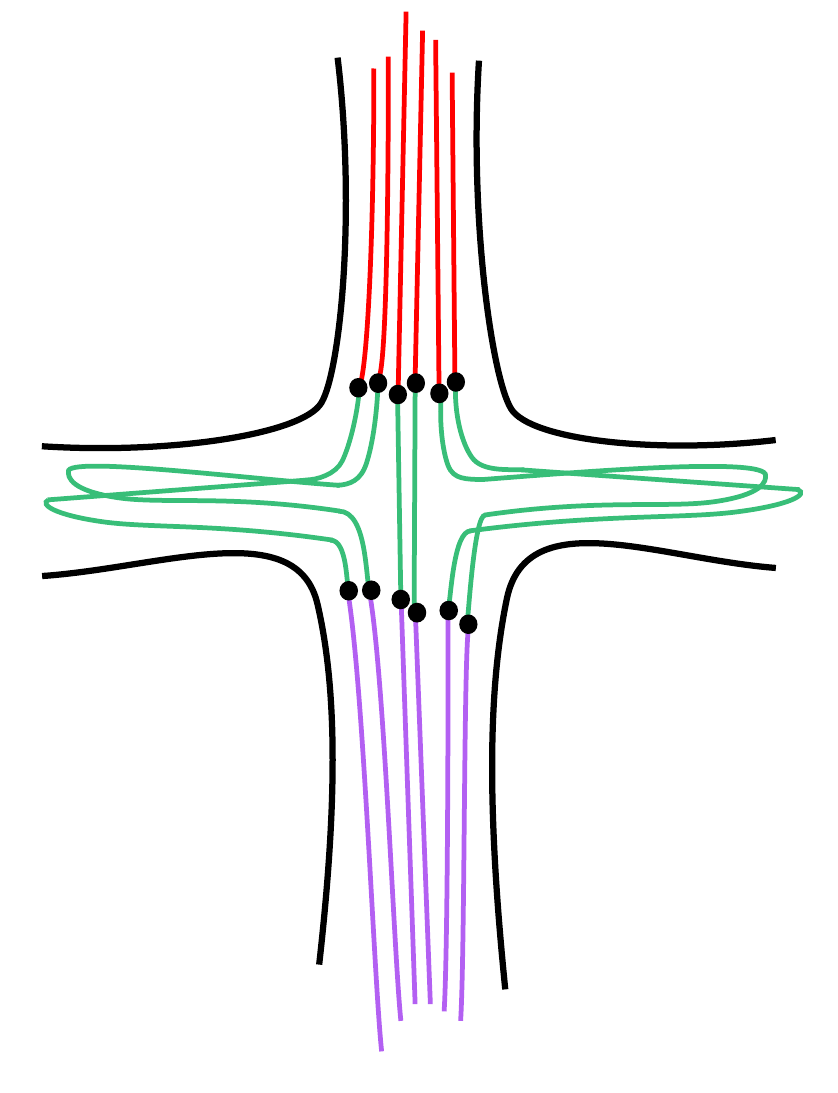}
        \caption{}
        \label{fig:4}
    \end{subfigure}
    \hfill
    \begin{subfigure}{0.19\textwidth}
        \centering
        \includegraphics[width=\linewidth]{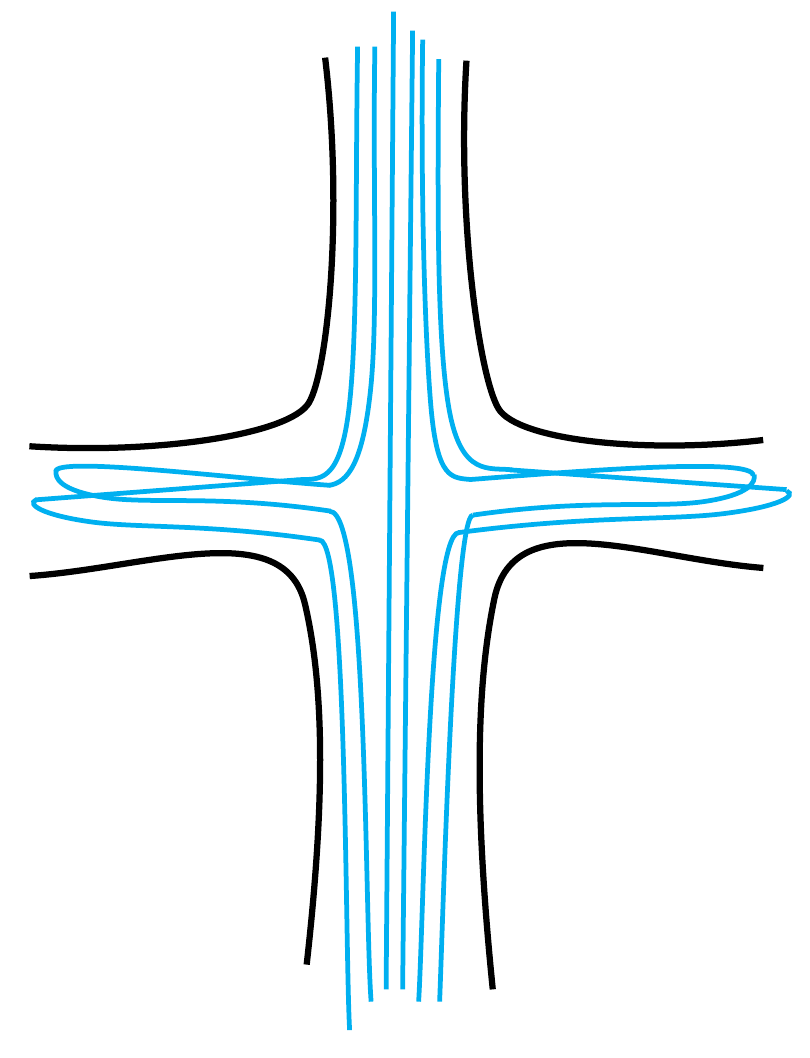}
        \caption{}
        \label{fig:5}
    \end{subfigure}
    
    \caption{Trajectory clustering based on the concept of stability. (a) The trajectories of six moving objects from top to bottom. (b) Detection of five clusters by sub-trajectory clustering. (c) Identification of three clusters by whole-trajectory clustering. (d) Disregarding the mid-trajectory anomalies and considering three sub-trajectories within a single cluster. (e) Ignoring the mid-trajectory anomalies completely and considering all trajectories as belonging to one cluster.}
    \label{fig:example}
\end{figure*}
\par
Building upon the discussions of whole-trajectory clustering and sub-trajectory clustering presented earlier, we can now obtain trajectory clusters and outliers. Then, the outlier or split trajectories can be assessed to determine whether their split should be considered a real split or not. 
Suppose $C_{num}$ represents a cluster of trajectories and $\pi_{a}$, $\pi_{b}$ are two outlier or split trajectories which, due to the anomalies in some time intervals, do not belong to the cluster $C_{num}$. Thus, they should be evaluated by the stable trajectory clustering algorithm. In the following, we will begin this evaluation with $\pi_{a}$. 
 We intend to determine the distance between the modified outlier trajectory of $\pi_{a}$ that is denoted as $\pi'_{a}$ and the members of cluster $C_{num}$ such as $\pi_{c}$ using definition \ref{7}. To this aim, suppose $l_{\pi_{a}}$ represents a line segment of $\pi_{a}$ and $l_{\pi_{c}}$ is a line segment of $\pi_{c}$ in $lc^{R_{i}}$ which is the current line segment cluster in $i$-th time interval. 
For these two outlier trajectories, two states may occur for their line segments in this time interval:
\begin{itemize}
    \item First indicates belonging to the current line segment cluster (e.g. $l_{\pi_{a}}\in lc^{R_{i}}$).
    \item Second refers to deviating from the current line segment cluster (e.g. $l_{\pi_{a}}\notin lc^{R_{i}}$).
\end{itemize}
 This process is illustrated in Fig. \ref{fig:n1}.
\begin{figure}[htbp]
    \centering
    \includegraphics[width=0.95\linewidth]{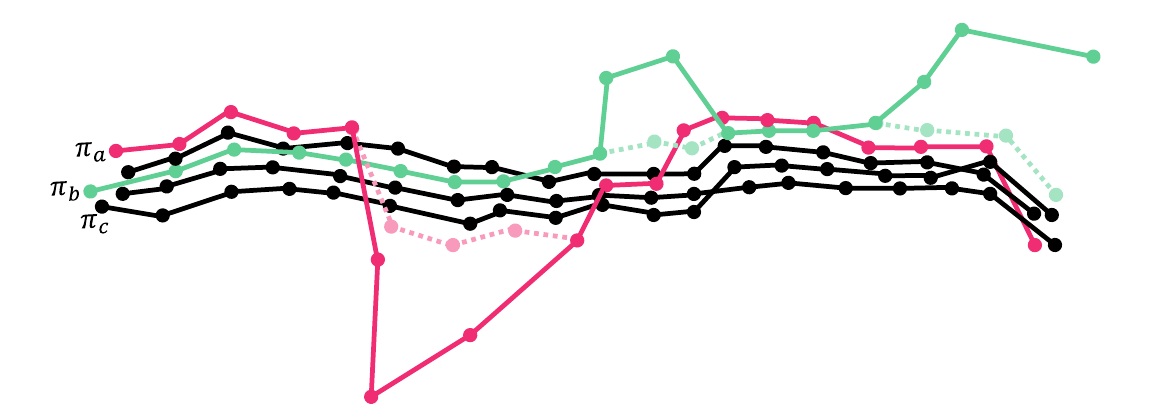}
    \caption{Two states for outlier trajectories $\pi_{a}$ and $\pi_{b}$. The trajectories in black are members of cluster $C_{num}$. The dashed line segments represent the approximate movement of objects that have deviated from their respective line segment clusters. }
    \label{fig:n1}
\end{figure}

Suppose the first state mentioned occurs within this time interval; in that case, it is sufficient to calculate the average Euclidean distance between $l_{\pi_{a}}$ and $l_{\pi_{c}}$ directly. This is because both line segments are in the same line segment cluster and are neighbors (definitions \ref{14} to \ref{91});
 otherwise, the second state occurs and the amount by which the distance between $l_{\pi_{a}}$ and $lc^{R_{i}}$ needs to be reduced for $l_{\pi_{a}}$ to become a member of $lc^{R_{i}}$ should be determined.
Consider $l_{N}$ as a borderline segment (definition \ref{90}) in $lc^{R_{i}}$. It is evident that the average Euclidean distance between $l_{\pi_{a}}$ and its nearest line segment $l_{N}$ exceeds $\varepsilon$ by $\delta_{i}$. This is because the distance being less than or equal to $\varepsilon$ implies that $l_{\pi_{a}}$ and $l_{N}$ are in the same cluster and are neighbors. Here, $\delta_{i}$ is the deviation in the $i$-th time interval. Accordingly, the average Euclidean distance between $l_{\pi_{a}}$ and $l_{\pi_{c}}$ in the current line segment cluster of the $i$-th time interval should be reduced by $\delta_{i}$. Thus, line segment $l_{\pi_{a}}$ is modified as $l'_{\pi_{a}}$. This modified line segment shows the most likely approximate movement of object $a$ in this time interval.
Therefore, the piecewise function for obtaining the set of average Euclidean distances in $T-1$ time intervals is defined as follows: 
\begin{equation}
D(\pi'_{a},\pi_{c})_{i}^{R_{i}}=
\begin{cases}
  dist(l_{\pi_{a}},l_{\pi_{c}}), & lc^{R_{i}}(l_{\pi_{a}})= lc^{R_{i}}(l_{\pi_{c}}),\\
  dist(l_{\pi_{a}},l_{\pi_{c}})-\delta_{i}, &     o.w, \\
\end{cases}
\end{equation}
where $i$ takes values in the range from $1$ to $T-1$ and \(\delta_{i} = l_{N} - \varepsilon\). This process is shown in Fig. \ref{fig:s1}.
The maximum value of these distances represents their overall average Euclidean distance and is denoted as $M_{(\pi'_{a},\pi_{c})}$. It indicates the maximum possible distance at which the outlier trajectory $\pi_{a}$ would be considered a member of cluster $C_{num}$, disregarding its anomalies.
\begin{equation}
M_{(\pi'_{a},\pi_{c})}=\max \lbrace D(\pi'_{a},\pi_{c})_{i}^{R_{i}} \quad | \quad i=1, \ldots, T-1\rbrace.
\end{equation}
The same processes occur not only for $\pi_{a}$ and the other members of cluster $C_{num}$ but also for $\pi_{b}$ and each member of cluster $C_{num}$. Eventually, the minimum of the maximum values is determined as $\mu_{min}$. It indicates which outlier trajectories $\pi_{a}$ and $\pi_{b}$, disregarding their anomalies, are more similar to and closer to the members of cluster $C_{num}$. Given that $N_{C}$ and $N_{O}$ represent the number of cluster members and outlier trajectories, respectively, $\mu_{min}$ is defined as follows:
\begin{equation}
    \mu_{min} = \min \left\{ \displaystyle M_{(\pi'_{i},\pi_{j})} \;\middle|\; 
    i=1, \ldots, N_{O}, \; j=1, \ldots, N_{c} \right\}.
    \label{eq:mumin}
\end{equation}

\begin{figure}[h!]
    \centering
    \includegraphics[width=.95\linewidth]{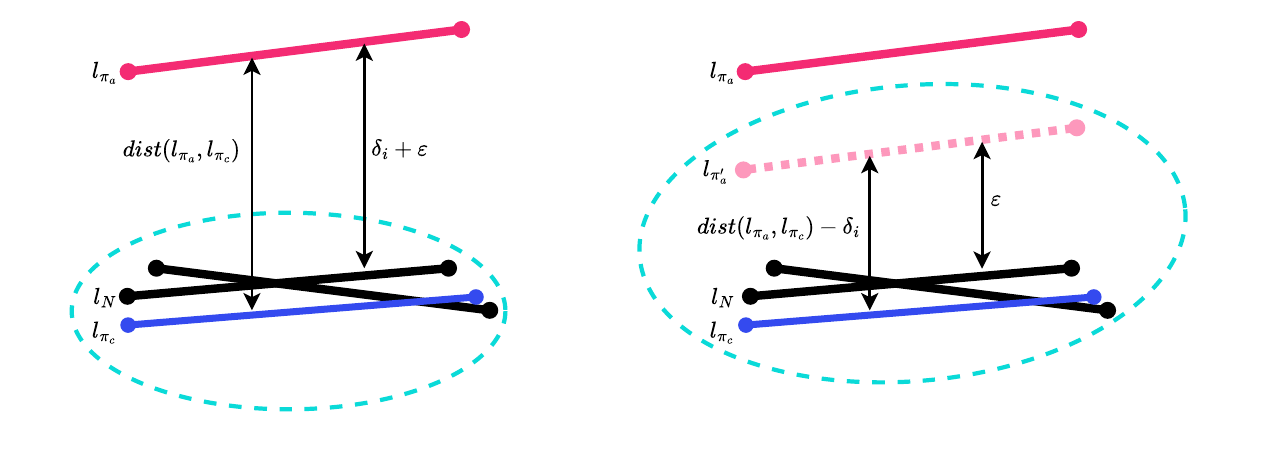}
    \caption{The average Euclidean distance between $l_{\pi_{a}}$ and $l_{\pi_{c}}$ with the assistance of the nearest line segment $l_{N}$ in current line segment cluster. $l_{\pi_{a}^{\prime}}$ is denoted as the approximate movement of the object in the $i$-th time interval.}
    \label{fig:s1}
\end{figure}

As mentioned in definition \ref{9}, the stable trajectory clustering algorithm aims to assess whether it is possible to ignore the anomalies that caused an outlier trajectory split from its respective cluster or not. 
The number of time intervals and the deviation amount in these intervals are important factors for deciding whether or not to ignore anomalies.
Specifically, one of the following four cases may occur:
\newcounter{desccount}
\newcommand{\descitem}[1]{%
  \item[#1] \refstepcounter{desccount}\label{#1}
}
\newcommand{\descref}[1]{\hyperref[#1]{#1}}
\begin{description}
    \descitem{Case 1.} The outlier trajectory deviates from its cluster in a few time intervals, but the deviation in these time intervals is large.
    \descitem{Case 2.} The outlier trajectory deviates from its cluster in many time intervals, and the deviation in these time intervals is also large.
    \descitem{Case 3.} The outlier trajectory deviates from its cluster in many time intervals, but the deviation in these time intervals is small.
    \descitem{Case 4.} The outlier trajectory deviates from its cluster in a few time intervals, and the deviation in these time intervals is also small.
\end{description}
Given these cases, we can use the mean deviation concept, which is defined as follows:
 \begin{definition}[Mean absolute deviation]\label{6}
    The mean absolute deviation is a statistical measure to determine how far the data are situated from the mean. It measures the average absolute difference between each data point and the mean. The formula \eqref{5} depicts the mean absolute deviation (MAD) for a set of $n$ data points such as $X=\left\lbrace x_{1},x_{2},...,x_{n}\right\rbrace$, where $x_{i}\in {\rm I\!R}$ and $\mu$ represents their mean:
    \begin{equation}\label{5}
        MAD=\frac{1}{n}\sum_{i=1}^{n}\abs{\mu-x_i}.
    \end{equation}
     In this formula, the deviations to the left and right of the mean are important.
     Thus, the sum of absolute differences between each right deviation (RD), which refers to the deviation of values that are greater than the mean, is denoted as the right mean deviation (RMD), while the sum of absolute differences between each left deviation (LD), which refers to the deviation of values that are less than the mean, is denoted as the left mean deviation (LMD). If LMD and RMD are not equal, then the data distribution is asymmetric or skewed.       
\end{definition}

Based on the aforementioned concepts and $\mu_{min}$ (defined in Eq. \ref{eq:mumin}), the goal is to determine which two outlier trajectories can belong to the cluster $C_{num}$, despite anomalies in a few time intervals. To put it differently, it is sufficient only to determine which members of cluster $C_{num}$ can be the neighbors of $\pi_{a}$ and $\pi_{b}$. To this aim, for each outlier trajectory like $\pi_{a}$ and each member of cluster $C_{num}$, the real average Euclidean distances of their line segments that are greater than $\mu_{min}$ are considered as RD and the real average Euclidean distances that are less than $\mu_{min}$ are regarded as LD. Thus, the sum of absolute differences between each right deviation and $\mu_{min}$ is denoted as RMD, while the sum of absolute differences between each left deviation and $\mu_{min}$ is denoted as LMD. These definitions are as follows:
\begin{equation}
\begin{aligned}
 RMD=\sum_{RD>\mu_{min}}\abs{RD-\mu_{min}},\\
 LMD=\sum_{LD<\mu_{min}}\abs{LD-\mu_{min}}.    
\end{aligned}
\end{equation}
Taking into account the aforementioned formula, and based on the fact that the average Euclidean distance of an outlier trajectory to cluster $C_{num}$ must be $\mu_{min}$ for it to be considered a member, if $RMD$ is greater than $LMD$, then a real split has occurred, and the deviations cannot be ignored. Specifically, this occurs when the deviation in the time intervals has been very significant (\descref{Case 1.} or \descref{Case 2.}) or the number of time intervals in which the trajectory deviates from its cluster is more than the number of time intervals in which the trajectory is in its cluster (\descref{Case 3.}); in contrast, if RMD is less than LMD, then, despite the anomalies, they belong to the same cluster and are stable in cluster $C_{num}$ (\descref{Case 4.}).

The stable trajectory clustering is presented in Algorithm \ref{alg:6}.
This method improves trajectory clustering results by evaluating outlier trajectories and their association with existing clusters.
It begins by checking if there are any outlier trajectories. If outliers exist, the algorithm iterates through each cluster, comparing each trajectory's line segments with those of other trajectories, and calculating distances and deviations.
It identifies the maximum distance among all these deviations and sets a minimum threshold, $\mu_{min}$. The algorithm further refines the cluster assignments by computing real distances and evaluating right and left deviations, deciding which outlier trajectories can remain in the respective cluster despite their deviations. If any cluster assignments are altered, it updates the clusters and returns the new set; otherwise, it returns the original set. 

\begin{algorithm}[!h]
\newcommand\mycommfont[1]{\small\ttfamily\textcolor{black}{#1}}
\SetCommentSty{mycommfont}
\SetKwData{Left}{left}
\SetKwData{This}{this}
\SetKwData{Up}{up}
\SetKwFunction{Union}{Union}
\SetKwFunction{FindCompress}{FindCompress}
\SetKwInOut{Input}{Input}
\SetKwInOut{Output}{Output}
\caption{Stable trajectory clustering}
\label{alg:6}
\Input{Cluster set $C$, Outlier trajectory set $\mathcal{O}$, $\varepsilon$.}
\Output{New cluster set $C$.}
\If{$\mathcal{O} = \emptyset$}{\Return $C$}
\ForEach{cluster $c_k \in C$}{
$\mathcal{M}_k \gets c_k.$allMembers;\\
    $\mathcal{O}_k \gets$ assignOutliers$(c_k, \mathcal{O})$;\\
    $\mu_{\min} \gets \infty$;\\
        \ForEach{$\pi_m \in \mathcal{M}_k, \pi_o \in \mathcal{O}_k$}{
         $M_{(\pi_m,\pi_o)} \gets -\infty$; \\
        \For{$t$ from 1 \KwTo $T-1$}{

            \eIf{$lc^{R_{t}}(l_{\pi_m}) = lc^{{R_{t}}}(l_{\pi_o})$}{                $D(\pi_{m},\pi_{o})_{t}^{R_{t}} \gets \text{dist}(l_{\pi_m}, l_{\pi_o})$;   
            }{
             $\delta \gets \text{nearestDistance}(\pi_o^t, \mathcal{M}_k^t)-\varepsilon$; \\
                $D(\pi_{m},\pi_{o})_{t}^{R_{t}} \gets \text{dist}(l_{\pi_m}, l_{\pi_o}) - \delta$;
            }
        $M_{(\pi_m,\pi_o)} \gets \max(M_{(\pi_m,\pi_o)}, D(\pi_{m},\pi_{o})_{t}^{R_{t}})$; 
        }
        $\mu_{\min} \gets \min(\mu_{\min},M_{(\pi_m,\pi_o)})$; 
    }   
}
\ForEach{$\pi_o \in \mathcal{O}_k$}{
        \ForEach{$\pi_m \in \mathcal{M}_k$}{
        $\text{LMD} \gets 0$, $\text{RMD} \gets 0$; \\
            \For{$t$ from 1 \KwTo $T-1$}{
            $D(\pi_{m},\pi_{o})_{t}^{R_{t}} \gets \text{dist}(l_{\pi_m}, l_{\pi_o})$;\\
\eIf{$D(\pi_{m},\pi_{o})_{t}^{R_{t}} < \mu_{\min}$}{$\text{LMD} \gets \text{LMD} + (\mu_{\min} - D(\pi_{m},\pi_{o})_{t}^{R_{t}})$;}{$\text{RMD} \gets \text{RMD} + (D(\pi_{m},\pi_{o})_{t}^{R_{t}} - \mu_{\min})$;}
}
        \If{$\text{LMD} > \text{RMD}$}{$c_k \gets c_k \cup \{\pi_o\}$;\\ \text{break};}
        }
   
    }
\Return $\text{updated} \; C$;
\end{algorithm}

The details of this algorithm are as follows: Initially, a function is defined to identify clusters and respective outlier trajectories (lines 3 and 4). Afterwards, an iteration is performed for each cluster and its outlier trajectories. This iteration tries to use the mean absolute deviation concept to find the outlier trajectories that, despite their anomalies, still belong to their respective clusters (lines 7- 28). Finally, if the clusters change, they will be updated and returned as a new set (line 29). The computational cost is $\mathcal{O}(|O_k| \cdot |c_k| \cdot m)$, as the algorithm evaluates the distance between every outlier and the cluster core members across all time intervals.
\par
The stable trajectory clustering algorithm can be utilized to analyze outlier trajectories, which due to their anomalies, are split from their respective cluster in some time intervals. In this analysis, it is important to consider both the number of time intervals in which deviations occur and the magnitudes of those deviations. We therefore used the concept of mean absolute deviation to decide if, despite the anomalies, outlier trajectories could be assigned to their nearest cluster. The stable trajectory clustering algorithm can be applied to both whole-trajectory clustering and sub-trajectory clustering problems. 

 \section{Empirical Results}
In this section, we empirically validate our proposed algorithms, which are the whole-trajectory clustering, the sub-trajectory clustering, and the stable trajectory clustering algorithms, using real-world trajectory data and evaluating their runtime performance. 

 \subsection{DataSets and Environment Setup}
We use two kinds of real trajectory datasets. Dataset 1 is related to a four-way traffic intersection, including maneuvers such as turns and U-turns, and contains 1900 trajectories, organized into 19 ground-truth clusters~\cite{57}. We selected this dataset because the ground truth is necessary to evaluate the impact of the stable trajectory clustering algorithm against outlier trajectories.
Dataset 2, which is known as the Vehicle Energy Dataset (VED), includes information from 383 privately-owned cars in Ann Arbor, Michigan, USA ~\cite{44}. This Dataset contains the trajectories of cars driving on both highways and traffic-dense areas from November 2017 to November 2018 and was collected by onboard OBD-II loggers.  Due to the difficulty of demonstrating the impact of the stable trajectory clustering algorithm on dense datasets, we selected this dataset to allow for a clear visualization of cluster quality.

\phantomsection
\label{para:data}
All algorithms were implemented in MATLAB 2020 and executed on a laptop with an Intel Core(TM) i7-4510U processor and 12GB of memory. Before running the algorithms, the data were preprocessed by removing duplicated spatial coordinates and adjusting all trajectories to a uniform length by linear interpolation, which was fundamental for establishing a common temporal framework for our analysis.

\subsection{Effectiveness and Parameter Sensitivity}
First, we will consider ranges of values for the two user-specified DBSCAN parameters, $\varepsilon$, and $MinLns$. For each set of DBSCAN parameters, we then apply the whole-trajectory clustering algorithms on Dataset 1. In sub-trajectory clustering, we further explore the effects of the sliding window parameters $S$ and $W$ on Dataset 1. Following that, we execute the stable trajectory clustering algorithm and analyze its effectiveness in Dataset 2.
\par
Sensitivity analysis for DBSCAN line segment clustering involves evaluating the number of clusters and cluster density across the ranges of user-specified input parameters.
These two parameters can be tuned depending on the characteristics of the data, and as mentioned before, their values impact the clustering result.
Specifically, larger values of $\varepsilon$ result in larger neighborhoods, which tend to produce larger and less compact clusters. In contrast, the value of $MinLns$ plays a significant role in assigning core or border status to line segments. Higher values require more neighboring line segments to classify a segment as core, potentially reducing the number of clusters or affecting cluster boundaries.
Fig. \ref{fig:1001} aims to provide a general overview of the impact of these two parameters on whole-trajectory clustering for Dataset 1. As expected, the number of clusters changes as a function of the values of $\varepsilon$ and $MinLns$. A similar analysis can be conducted to examine the influence of other values for these parameters, as illustrated in Fig. \ref{fig:1002}.
\begin{figure}[htbp]
    \centering
    \includegraphics[width=.95\textwidth]{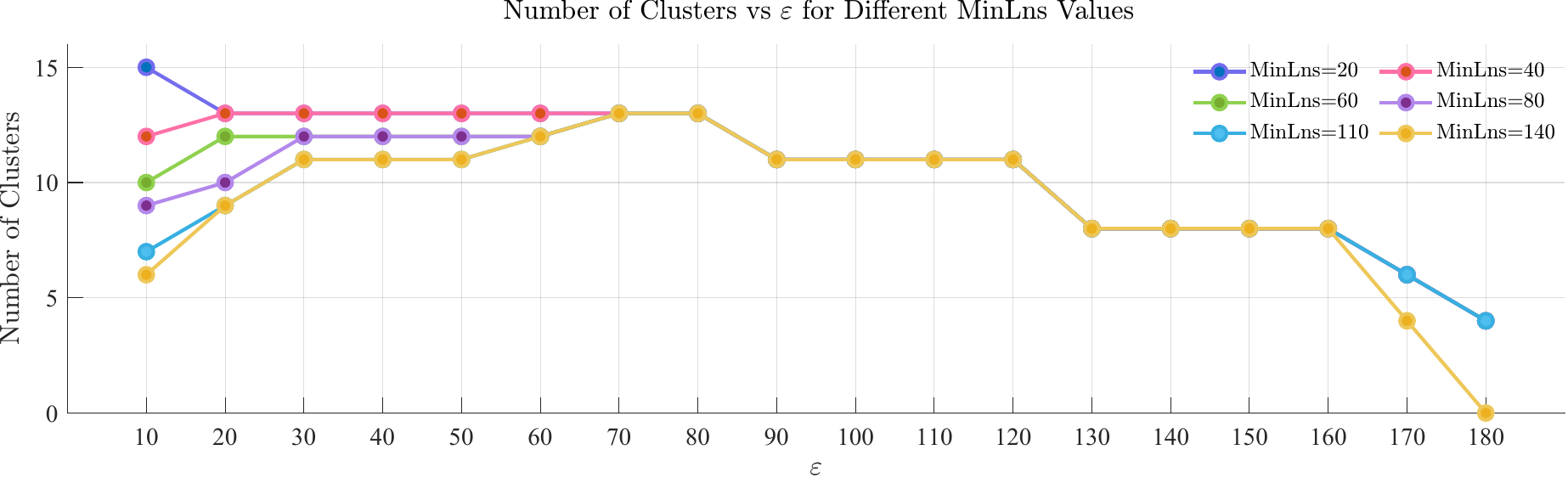}
    \caption{Effect of $\varepsilon$ and $MinLns$ on the number of clusters.}
    \label{fig:1001}
\end{figure}
\begin{figure}[ht]
    \centering
    \includegraphics[width=.95\textwidth]{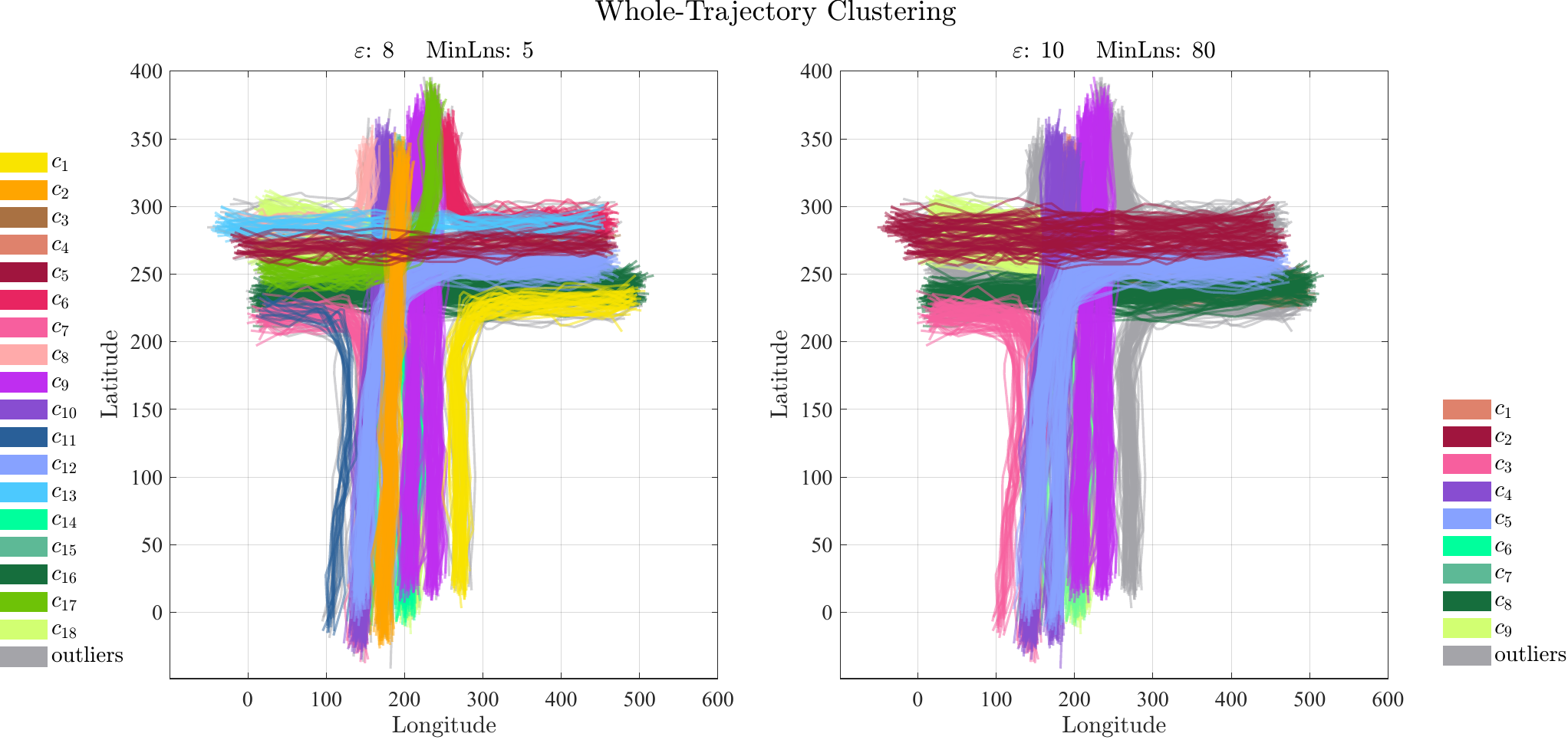}
    \caption{Whole-trajectory clustering on Dataset 1. Increasing $\varepsilon$ (from 8 to 10) and $MinLns$ (from 5 to 80) leads to denser clusters, alters their count, and impacts the number of outlier trajectories. These changes stem from line segment clustering in each time interval, impacting the overall whole-trajectory clustering.}
    \label{fig:1002}
\end{figure}

In sub-trajectory clustering, both the DBSCAN parameters and the sliding window parameters affect the results. The effects of the values $\varepsilon$ and $MinLns$ are the same as those mentioned for whole-trajectory clustering. Thus, we only present the results of varying the sliding window parameters $S$ and $W$.
\begin{figure}[htbp]
    \centering
    \begin{subfigure}[b]{1\linewidth}
        \centering
        \includegraphics[width=.95\textwidth]{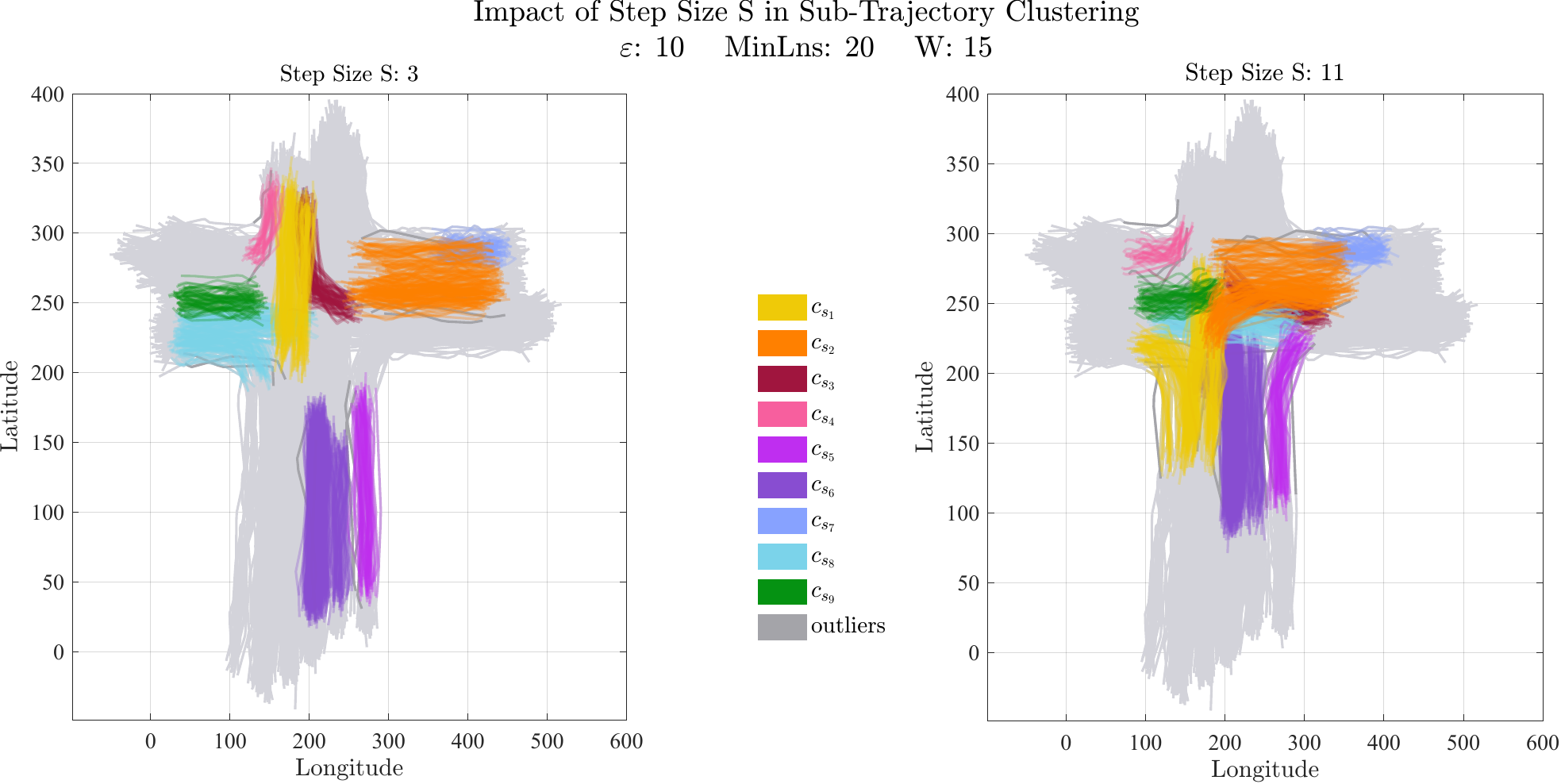}
        \caption{}
        \label{fig:1003}
    \end{subfigure}
    \hfill
    \begin{subfigure}[b]{1\linewidth}
        \centering
        \includegraphics[width=.95\textwidth]{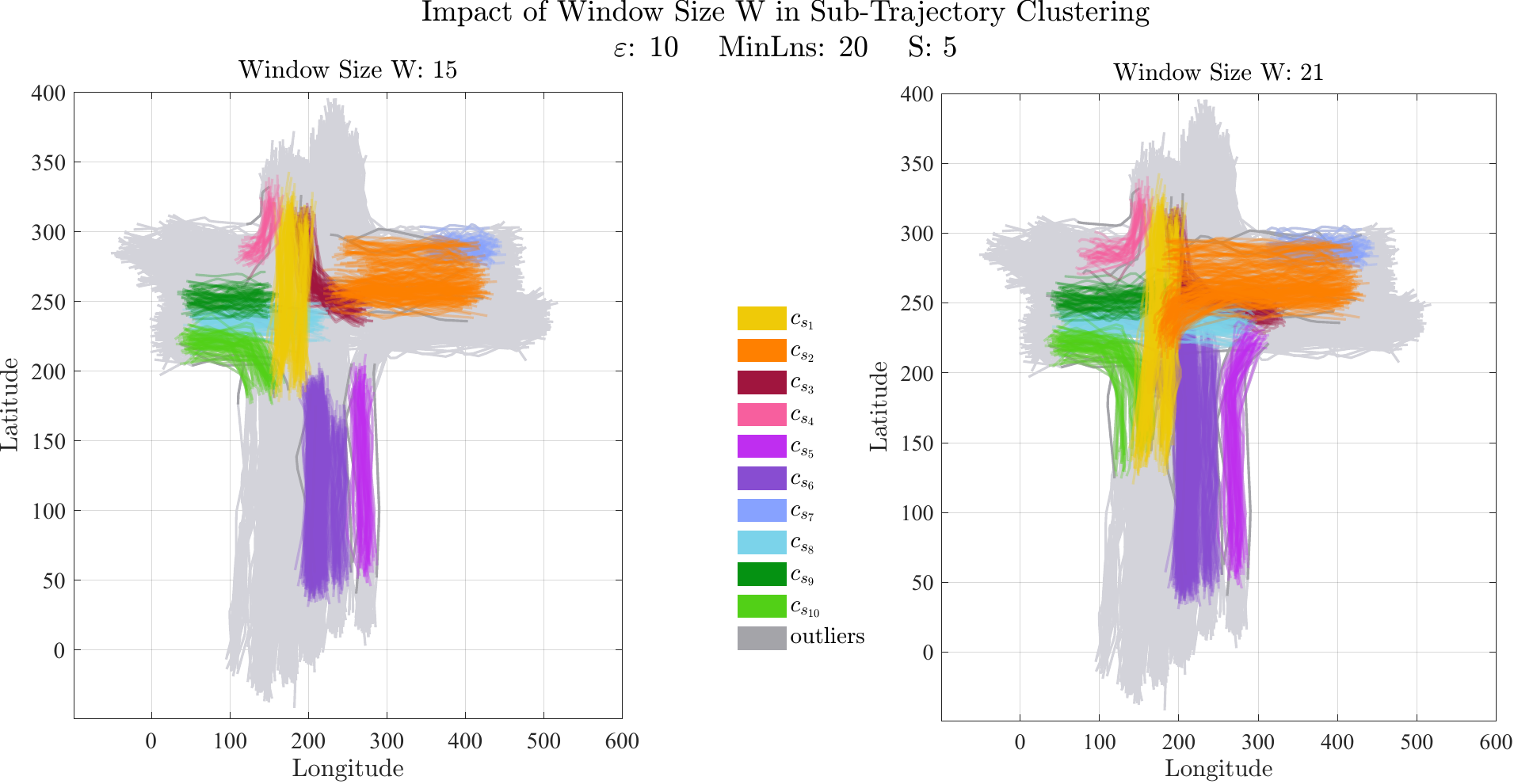}
        \caption{}
        \label{fig:1004}
    \end{subfigure}
    \caption{Sub-trajectory clustering on Dataset 1 during the second iteration of the algorithm. (a) Comparison of sub-trajectory clustering results using different step sizes ($S = 3$ and $S = 11$) with a fixed window size ($W = 15$). (b) Comparison of sub-trajectory clustering results using different window sizes ($W = 15$ and $W = 21$) with a fixed step size ($S = 5$). }
    \label{fig:mainfig}
\end{figure}
Fig. \ref{fig:mainfig} demonstrates the effects of varying $S$ and $W$ on sub-trajectory clustering for Dataset 1,  which contains trajectories divided into 50-time intervals. The results correspond to the second iteration of the algorithm.
For a fixed window size $W$, increasing the parameter $S$ causes the sliding window to move forward more significantly during the same iterations of the algorithm (\ref{fig:1003}). On the other hand, the larger the sliding window size, the more extended time intervals it encompasses, leading to a wider range of sub-trajectories (\ref{fig:1004}). 

The execution of the whole-trajectory clustering algorithm on Dataset 2 is illustrated in Fig. \ref{fig:mainfig2}. For further clarity, magnified views within the figures are provided, highlighting parts of cluster $C_{6}$ and $C_{9}$, along with their outlier trajectories in detail.
After applying the stable trajectory clustering algorithm, the minor deviations of the outlier trajectory are disregarded, and the trajectories are reassigned to their cluster (\ref{fig:1006}).
\begin{figure}[htbp]
    \centering
    \begin{subfigure}[b]{1\linewidth}
        \centering
        \includegraphics[width=.9\textwidth]{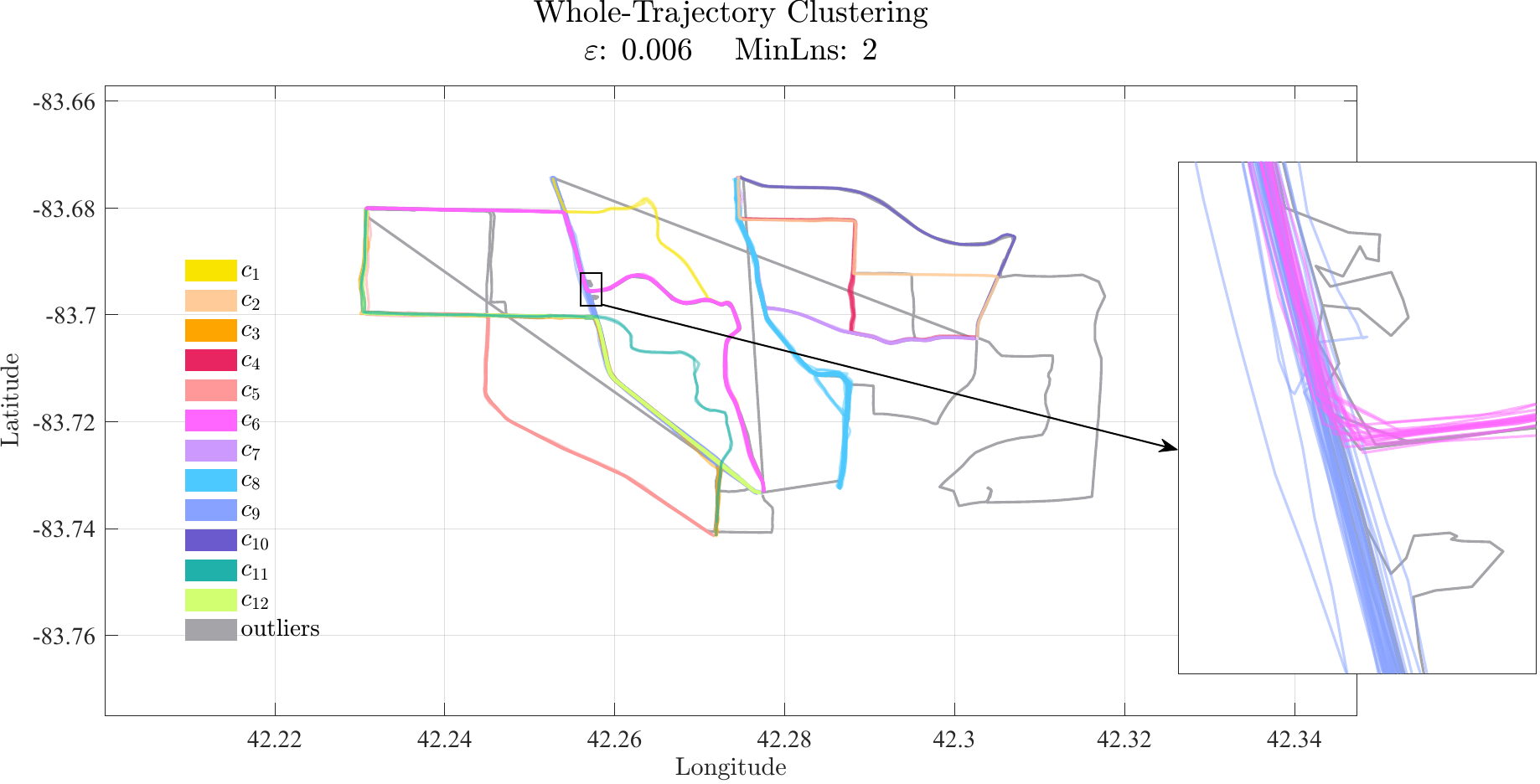}
        \caption{}
        \label{fig:1005}
    \end{subfigure}
    \hfill
    \begin{subfigure}[b]{1\linewidth}
        \centering
    \includegraphics[width=.9\textwidth]{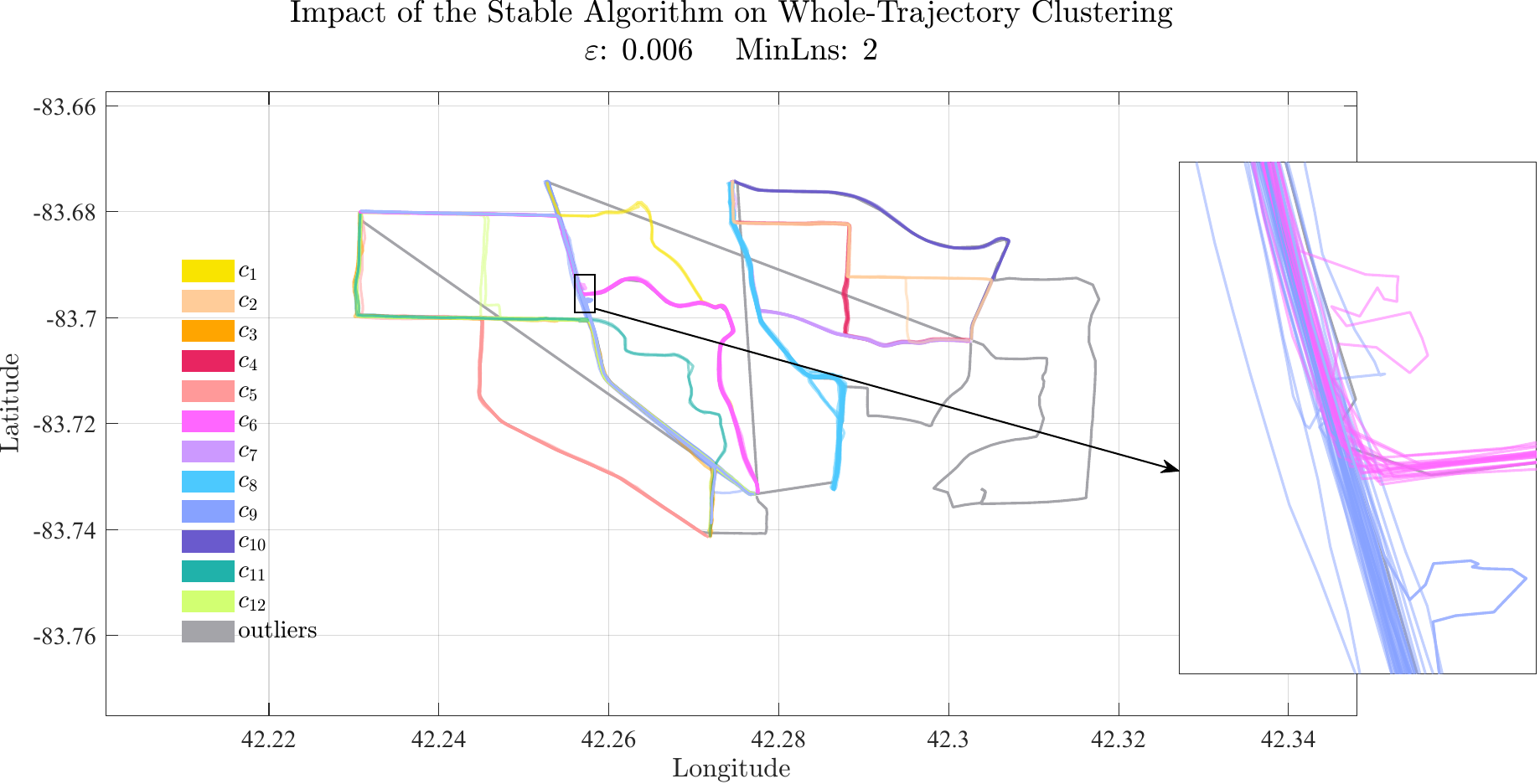}
        \caption{}
        \label{fig:1006}
    \end{subfigure}
    \caption{The stable trajectory clustering algorithm results on Dataset 2 over 101 time intervals. (a) The whole-trajectory clustering result before running the stable trajectory clustering algorithm ($\varepsilon
    =0.006$ and $MinLns=2$). The magnified view of $C_{6}$ and $C_{9}$ includes outlier trajectories that temporarily split from their cluster for a few seconds. (b) The whole trajectory clustering result after running the stable trajectory clustering algorithm. The magnified view shows that former outlier trajectories are added to $C_{6}$ and $C_{9}$.
}
    \label{fig:mainfig2}
\end{figure}
As discussed in section \ref{sec:stb}, in the stable trajectory clustering algorithm, the number of time intervals and the deviation amount in these intervals are important factors in deciding whether or not to ignore anomalies. Based on these two factors, $\mu_{min}$ is calculated for each of 12 clusters and their corresponding outlier trajectories. For example, this value for cluster $C_{6}$ and its 9 outlier trajectories is $0.0074$. Out of these, 4 are correctly assigned to their respective clusters after applying the stable clustering algorithm.
Fig. \ref{fig:1007} illustrates the temporal distance of outlier trajectories to their nearest cluster members, recognized as the distance from cluster $C_{6}$.
Each box plot represents the distribution of the distances between the segments of an outlier trajectory and the segments of its closest trajectory within the cluster across 101 time intervals. As is clear, outlier trajectories with a mean less than $\mu$ are identified as non-outlier trajectories. This means that the stable trajectory clustering algorithm can ignore insignificant anomalies in a few time intervals. These time intervals are shown in Fig. \ref{fig:1008}. 
This figure depicts each outlier trajectory as a series of 101 small rectangles, each corresponding to a time interval. It illustrates not only the time intervals during which each outlier trajectory remained within cluster $C_{6}$ but also those during which it was temporarily split from the cluster. 
\begin{figure}[htbp]
    \centering
    \includegraphics[width=.8\textwidth]{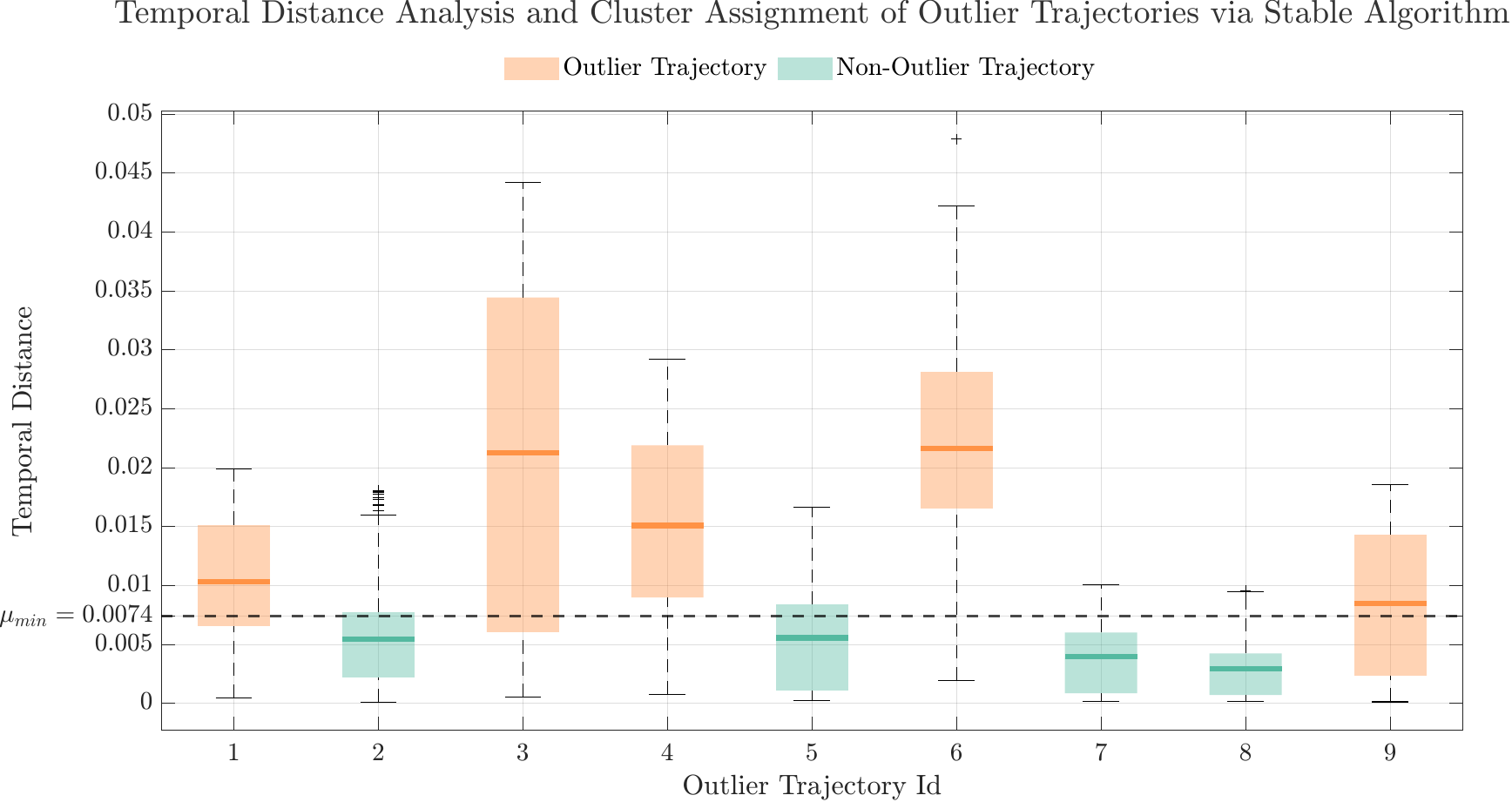}
    \caption{Temporal distance of outlier trajectories from cluster $C_{6}$. In these box plots, each line represents the average distances over 101 time intervals. The dashed horizontal line indicates $\mu_{min}$, a threshold value calculated to indicate the maximum acceptable average distance for identifying non-outlier trajectories.}
  \label{fig:1007}
\end{figure}

\begin{figure*}[htbp]
    \centering
    \includegraphics[width=1\textwidth]{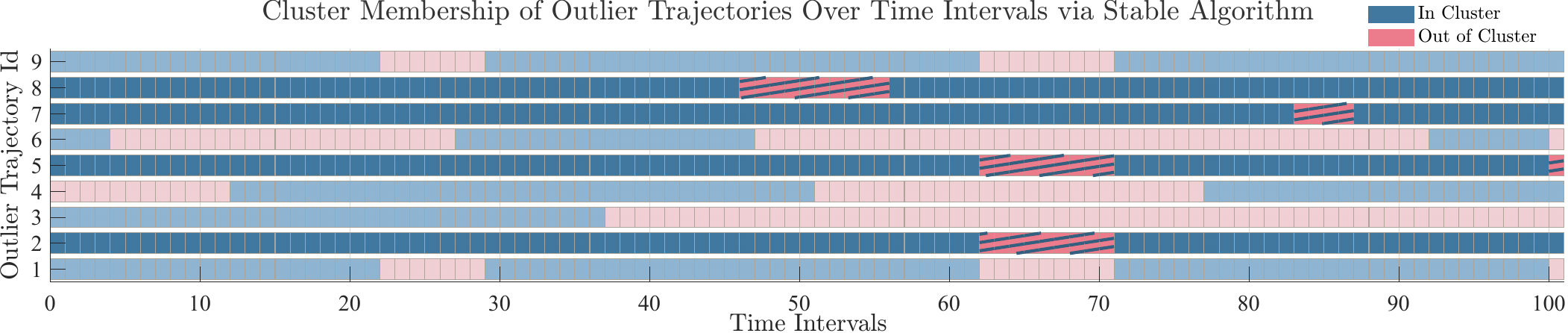}
    \caption{Temporal membership of outlier trajectories in cluster $C_{6}$ over 101-time intervals. Blue rectangles represent the time intervals during which the outlier trajectory remains within the cluster, while pink rectangles signify the time intervals when it is outside the cluster. Non-outlier trajectories are represented by the darker blue and darker pink areas, based on the indicated non-outlier trajectories in Fig. \ref{fig:1007}. The darker pink regions are also hatched, indicating time intervals where outlier trajectory deviations were disregarded.
}
    \label{fig:1008}
\end{figure*}
As mentioned before, VED (Dataset 2) provides a rich and diverse collection of real-world driving data, including GPS trajectories and energy consumption records, offering an invaluable resource for analyzing driving behavior and energy use.
Although the stable clustering algorithm on the VED data trajectories can detect minor deviations in outlier trajectories, it considers those with sharp changes in direction, which are highly noticeable, as real outlier trajectories.
This process improves the quality and reliability of the clustering results, as it considers driving patterns that reflect typical behavior. For example, a trajectory that was mistakenly labeled as an outlier due to a brief stop or a slight turn can now be correctly included in its cluster, improving the representation of real driving behavior. The improved clusters lead to more accurate energy consumption analyses and make the analysis more reliable for real-world applications like eco-driving strategies, trajectory optimization, and traffic flow studies.
This filtering slightly reduces the average Silhouette coefficient score. This metric quantifies how well data trajectories fit within their clusters (cohesion) and how distinctly they are separated from others (separation).
This reduction is due to assigning some outlier trajectories to their respective clusters after applying the stable trajectory clustering algorithm.
The Silhouette score is calculated for Dataset 1 and Dataset 2 using different DBSCAN parameter settings before and after executing the stable trajectory clustering algorithm, as shown in Table \ref{tab:clustering_results}.
In this table, in addition to this metric, the standard deviation is also computed to indicate the dispersion of the Silhouette scores of trajectories and the range within which these values vary.
The key aspect for comparison is the difference in the Silhouette score before and after executing the algorithm. The smaller this difference, and the more trajectories successfully assigned to their respective clusters, the more effective the algorithm is. A lower difference indicates that the algorithm has successfully identified outlier trajectories with minor deviations, thus preserving the overall clustering structure and maintaining the average Silhouette score with minimal change.
It should be noted that an outlier trajectory may either connect at least two clusters or be assigned to a single cluster by the stable trajectory clustering algorithm. This occurs because different segments of the outlier trajectory may belong to different clusters. After running the algorithm, the Silhouette score is evaluated for both cases, and the outlier trajectory is ultimately placed where it results in the smallest decrease in the Silhouette score.
\begin{table}[ht]
\centering
{%
\begin{tabular}{@{}ccccccccc@{}}
\toprule
\textbf{Dataset} & \multicolumn{2}{c}{\textbf{DBSCAN}} & \multicolumn{3}{c}{\textbf{Before STC}} & \multicolumn{3}{c}{\textbf{After STC}} \\ \midrule
 & \textbf{$\varepsilon$} & \textbf{MinLns} & \textbf{\#cl} & \textbf{Out.} & \textbf{Silhouette} & \textbf{\#cl} & \textbf{Out.} & \textbf{Silhouette} \\ \cmidrule(lr){2-3} \cmidrule(lr){4-6} \cmidrule(l){7-9} 
\multirow{3}{*}{\textbf{\#1}} & $2$ & $0.0075$ & $13$ & $13$ & $0.640 \pm 0.19$ & $13$ & $6$ & $0.607 \pm 0.20$ \\
 & $2$ & $0.0060$ & $12$ & $18$ & $0.653 \pm 0.21$ & $12$ & $7$ & $0.617 \pm 0.22$ \\
 & $3$ & $0.0065$ & $11$ & $17$ & $0.661 \pm 0.19$ & $11$ & $9$ & $0.623 \pm 0.20$ \\ \midrule
\multirow{3}{*}{\textbf{\#2}} & $8$ & $5$ & $18$ & $121$ & $0.709 \pm 0.19$ & $18$ & $6$ & $0.702 \pm 0.19$ \\
 & $10$ & $20$ & $15$ & $42$ & $0.750 \pm 0.12$ & $15$ & $0$ & $0.747 \pm 0.12$ \\
 & $60$ & $80$ & $11$ & $40$ & $0.773 \pm 0.08$ & $11$ & $0$ & $0.722 \pm 0.17$ \\ \bottomrule
\end{tabular}
}
 \caption{Comparison of clustering results before and after applying the stable trajectory clustering (STC) algorithm for two datasets. The results show how STC improves the cluster structure as it assigns outlier trajectories (Out.) to their respective clusters, which causes a slight reduction in the Silhouette score. A smaller difference in the Silhouette score before and after STC indicates higher effectiveness.
}
    \label{tab:clustering_results}
\end{table}
Table \ref{tab:runtime_clean} demonstrates the runtime performance of trajectory clustering approaches, where simpler datasets and lower parameter settings yield faster results, while more complex datasets and higher parameter settings require significantly more computation time.
\begin{table}[htbp]
\centering
{%
\begin{tabular}{@{}cccccccllcccll@{}}
\toprule
\multirow{2}{*}{\textbf{Dataset}} & \multicolumn{3}{c}{\multirow{2}{*}{\textbf{Split-Merge}}}   & \multicolumn{2}{c}{\textbf{Whole-Traj.}} & \multicolumn{8}{c}{\textbf{Sub-Traj.}}                                                                  \\ \cmidrule(l){5-14} 
                                  & \multicolumn{3}{c}{}                                        & \textbf{Cl. Algo.}   & \textbf{STC}      & \multicolumn{3}{c}{\textbf{Cl. Algo.}} & \multicolumn{2}{c}{}    & \multicolumn{3}{c}{\textbf{STC}}     \\ \midrule
                                  & \textbf{$\varepsilon$} & \textbf{MinLns} & \textbf{Time} & \textbf{Time}     & \textbf{Time}  & \multicolumn{3}{c}{\textbf{Time}}   & \textbf{S} & \textbf{W} & \multicolumn{3}{c}{\textbf{Time}} \\ \cmidrule(lr){2-4} \cmidrule(lr){5-6} \cmidrule(l){7-14}  
\multirow{3}{*}{\textbf{\#1}}     & $10$                   & $20$            & $353$         & $32$              & $737$          & \multicolumn{3}{c}{$172$}           & $3$        & $15$       & \multicolumn{3}{c}{$1894$}        \\
                                  & $20$                   & $60$            & $389$         & $35$              & $957$          & \multicolumn{3}{c}{$126$}           & $5$        & $20$       & \multicolumn{3}{c}{$1423$}        \\
                                  & $40$                   & $80$            & $321$         & $31$              & $845$          & \multicolumn{3}{c}{$158$}           & $4$        & $10$       & \multicolumn{3}{c}{$1787$}        \\ \midrule
\multirow{3}{*}{\textbf{\#2}}     & $0.0060$               & $2$             & $15$          & $1$               & $15$           & \multicolumn{3}{c}{$15$}            & $1$        & $10$       & \multicolumn{3}{c}{$69$}          \\
                                  & $0.0028$               & $3$             & $13$          & $0.9$               & $21$           & \multicolumn{3}{c}{$19$}            & $1$        & $20$       & \multicolumn{3}{c}{$211$}         \\
                                  & $0.0040$               & $4$             & $14$          & $1$               & $22$           & \multicolumn{3}{c}{$25$}            & $2$        & $20$       & \multicolumn{3}{c}{$130$}         \\ \bottomrule

\end{tabular}
}
\caption{Runtime performance comparison of trajectory clustering algorithms: Split-Merge (with $\varepsilon$, MinLns), Whole-Trajectory, and Sub-Trajectory methods with parameters $S$ and $W$. Times reported in seconds for Datasets \textbf{1} and \textbf{2}.}
\label{tab:runtime_clean}
\end{table}
\subsection{Comparative Evaluation and Discussion}
We evaluate our proposed clustering method against representative baseline methods, including EDR (Edit Distance on Real sequences), LCSS (Longest common subsequence), and DWT (Dynamic Time Warping), and the T2VEC deep learning model, selected from the benchmark study by Wang et al.~\cite{61} on Dataset 1. Additionally, we investigate the impact of the STC algorithm on all considered methods. Given that the dataset required preprocessing to fit our framework, we re-implemented the baselines to ensure a fair and consistent comparison.

Table \ref{table3} presents the runtime and performance results based on NMI and ARI metrics, where higher values indicate the efficiency of the STC algorithm in disregarding insignificant anomalies, directly resulting in better clustering performance.
Based on the 19 ground-truth clusters established in the dataset, we optimized the DBSCAN hyperparameters ($\varepsilon $ and $MinLns$) to obtain 19 whole-trajectory clusters with the minimum possible number of outliers across all methods. The results show that the STC algorithm effectively reassigned outlier trajectories with only minor deviations to their appropriate clusters. Consequently, the NMI and ARI scores increased significantly across all methods.
\par
Traditional methods rely on shape-based similarity between trajectories. Although DTW achieved the most accurate clustering among the baselines, it was computationally slower than our method. LCSS performed reasonably well but incurred the highest execution time, and EDR produced the weakest results. Thus, the similarity measure adopted in our work led to better clustering outcomes compared to the traditional approaches.

The deep learning model T2VEC achieved competitive results on the dataset. However, several limitations restrict its practicality. The model depends heavily on large training datasets (millions of trajectories) to extract meaningful trajectory representations. On small or medium-sized datasets, such as the one used in this study, the risk of underfitting and weak generalization increases. Another important limitation lies in interpretability. The model produces high-dimensional vector representations, but it is difficult to explain why two trajectories are considered similar. In contrast, the geometric principles used in our method are intuitive and directly reflect observable properties of the movement, such as curvature and overall shape. Furthermore, deep learning models are sensitive to a wide range of hyperparameters that are often complex to tune. Our approach depends on two parameters ($\varepsilon$ and $MinLns$) with clear geometric meaning, and provides consistent results across different runs. Finally, we could not apply the STC algorithm to T2VEC because of a fundamental difference in how they work. STC requires checking trajectory segments within specific time intervals. However, T2VEC converts the entire trajectory into a single vector, which hides the detailed time information necessary for this process. Although T2VEC performed competitively on the dataset, its strong results can be understood as an indirect approximation of the same geometric patterns that our method captures more simply and efficiently. 
\begin{table}[h!]
\centering
\renewcommand{\arraystretch}{1.4} 

\begin{tabular}{ccccc}
\hline
Category                       & Method                              & NMI             & ARI             & Time (s)       \\ \hline
\multirow{6}{*}{Traditional}   & EDR                                 & 0.6521          & 0.3027          & 1526           \\
                               & \textbf{EDR (STC)}                  & \textbf{0.7450} & \textbf{0.4618} & \textbf{8470} \\ \cline{2-5} 
                               & LCSS                                & 0.7285          & 0.5043          & 8764           \\
                               & \textbf{LCSS (STC)}                 & \textbf{0.8794} & \textbf{0.6311} & \textbf{16080} \\ \cline{2-5} 
                               & DTW                                 & 0.8622          & 0.7197          & 2524           \\
                               & \textbf{DTW (STC)}                  & \textbf{0.9459} & \textbf{0.8359} & \textbf{12430} \\ \hline
\multirow{2}{*}{Deep Learning} & T2VEC                               & 0.8098          & 0.7014          & 1380           \\
                               & T2VEC (STC)                         & N/A               & N/A               & N/A              \\ \hline
\multirow{2}{*}{This Work}     & \textit{Whole-Traj.}                & 0.8542          & 0.6832          & 379            \\
                               & \textit{\textbf{Whole-Traj. (STC)}} & \textbf{0.9103} & \textbf{0.7340} & \textbf{912}  \\ \hline
\end{tabular}

\caption{ A performance comparison of clustering methods before and after the STC algorithm. The STC post-processing step shows a consistent boost in NMI and ARI scores for the traditional methods and our proposed approach. The $Whole-Traj. (STC)$ method indicates a strong balance between high accuracy and computational efficiency. Note: the STC is not applicable to T2VEC due to its non-geometric vectors; its execution time was measured on a GPU.
}
\label{table3}
\end{table}

\section{Conclusion}
Trajectory clustering algorithms can analyze either whole trajectories or subsets of them.
In this paper, we propose both a whole-trajectory clustering algorithm and a sub-trajectory clustering algorithm. To discover the continuous object movements in each time interval, the split and merge procedures for line segments were implemented via DBSCAN line segment clustering. We used the average Euclidean distance to measure the distance between line segments. 
The result of line segment clustering is used not only for sliding-window sub-trajectory clustering to recognize local similarities, but also in whole-trajectory clustering to capture the similarity of object movements across all time intervals.
Furthermore, we introduced a novel algorithm, called the stable trajectory clustering algorithm, that represents an advanced leap forward in the literature on trajectory clustering. This algorithm addresses scenarios where a moving object temporarily diverges from its cluster and then rejoins. 
Ignoring these minor anomalies at large scales helps analysts perform trajectory clustering without being distracted by insignificant deviations, maintains data integrity, and improves clustering accuracy. For example, vehicles stopping at traffic lights, pedestrians stopping at shop windows, or animals briefly detouring during migration should still be considered part of the main movement cluster. 
To determine whether the anomalies of outlier trajectories are real or temporary, 
the number of time intervals in which deviations occur, and the magnitudes of those deviations are important factors.
Thus, we used the mean absolute deviation concept to analyze these factors. The empirical results on real data illustrated the effectiveness of the algorithm in detecting and ignoring insignificant anomalies. For this aim, NMI, ARI, and Silhouette score criteria were used to evaluate the quality of the clustering results before and after applying the stable trajectory clustering algorithm. The comparison showed that the stable clustering trajectory algorithm consistently increased the NMI and ARI scores. The results demonstrates its efficiency in handling insignificant anomalies to effectively improve clustering quality for both our proposed whole-trajectory method and traditional clustering methods. However, it was not applied to the deep learning method due to the non-geometric nature of its embeddings.
The slight decrease in the Silhouette score is an expected consequence of this accuracy improvement. It confirms that the stable trajectory clustering algorithm focuses on creating semantically correct clusters, rather than ones that are merely geometrically compact. 

The proposed algorithms exhibit sensitivity to the input parameters $\varepsilon$ and $MinLns$, with optimal values highly dependent on the characteristics of the moving data. This represents a limitation of our approach and warrants further investigation in future research. In addition, other ways to measure the stability of trajectory clustering can be used instead of the mean absolute deviation applied in this study. The definition of stability adopted in this work serves as the basis of our proposed algorithm. However, in other approaches, such as deep learning methods, it will be implemented differently.

\section*{Acknowledgment}
This work was partially funded by the Center of Advanced Systems Understanding (CASUS), which is financed by Germany’s Federal Ministry of Education and Research (BMBF) and by SMWK with tax funds on the basis of the budget approved by the Saxon State Parliament. 
\bibliographystyle{unsrt}
\bibliography{mybibliography}

@article{1,
  title={TAD: A trajectory clustering algorithm based on spatial-temporal density analysis},
  author={Yang, Yuqing and Cai, Jianghui and Yang, Haifeng and Zhang, Jifu and Zhao, Xujun},
  journal={Expert Systems with Applications},
  volume={139},
  pages={112846},
  year={2020},
  publisher={Elsevier}
}

@article{2,
  title={An algorithm for clustering animals by species based upon daily movement},
  author={Curry, David M},
  journal={Procedia Computer Science},
  volume={36},
  pages={629--636},
  year={2014},
  publisher={Elsevier}
}

@article{3,
  title={Trajectory Clustering for Air Traffic Categorisation},
  author={Boli{\'c}, Tatjana and Castelli, Lorenzo and De Lorenzo, Andrea and Vascotto, Fulvio},
  journal={Aerospace},
  volume={9},
  number={5},
  pages={227},
  year={2022},
  publisher={MDPI}
}

@article{4,
  title={Traffic Network Identification Using Trajectory Intersection Clustering},
  author={Gerdes, Ingrid and Temme, Annette},
  journal={Aerospace},
  volume={7},
  number={12},
  pages={175},
  year={2020},
  publisher={MDPI}
}

@article{5,
  title={Trajectory clustering of air traffic flows around airports},
  author={Olive, Xavier and Morio, J{\'e}r{\^o}me},
  journal={Aerospace Science and Technology},
  volume={84},
  pages={776--781},
  year={2019},
  publisher={Elsevier}
}

@article{6,
  title={Deriving animal movement behaviors using movement parameters extracted from location data},
  author={Teimouri, Maryam and Indahl, Ulf Geir and Sickel, Hanne and Tveite, H{\aa}vard},
  journal={ISPRS International Journal of Geo-Information},
  volume={7},
  number={2},
  pages={78},
  year={2018},
  publisher={MDPI}
}

@article{7,
  title={Exploring urban travel patterns using density-based clustering with multi-attributes from large-scaled vehicle trajectories},
  author={Tang, Jinjun and Bi, Wei and Liu, Fang and Zhang, Wenhui},
  journal={Physica A: Statistical Mechanics and its Applications},
  volume={561},
  pages={125301},
  year={2021},
  publisher={Elsevier}
}

@inproceedings{8,
  title={Vector field k-means: Clustering trajectories by fitting multiple vector fields},
  author={Ferreira, Nivan and Klosowski, James T and Scheidegger, Carlos E and Silva, Cl{\' a}udio T},
  booktitle={Computer Graphics Forum},
  volume={32},
  number={3pt2},
  pages={201--210},
  year={2013},
  organization={Wiley Online Library}
}

@inproceedings{9,
  title={Clustering moving objects},
  author={Li, Yifan and Han, Jiawei and Yang, Jiong},
  booktitle={Proceedings of the tenth ACM SIGKDD international conference on Knowledge discovery and data mining},
  pages={617--622},
  year={2004}
}

@inproceedings{10,
  title={On discovery of traveling companions from streaming trajectories},
  author={Tang, Lu-An and Zheng, Yu and Yuan, Jing and Han, Jiawei and Leung, Alice and Hung, Chih-Chieh and Peng, Wen-Chih},
  booktitle={2012 IEEE 28th International Conference on Data Engineering},
  pages={186--197},
  year={2012},
  organization={IEEE}
}

@article{11,
  title={Continuous clustering of moving objects},
  author={Jensen, Christian S and Lin, Dan and Ooi, Beng Chin},
  journal={IEEE transactions on knowledge and data engineering},
  volume={19},
  number={9},
  pages={1161--1174},
  year={2007},
  publisher={IEEE}
}

@article{12,
  title={Effective online group discovery in trajectory databases},
  author={Li, Xiaohui and Ceikute, Vaida and Jensen, Christian S and Tan, Kian-Lee},
  journal={IEEE Transactions on Knowledge and Data Engineering},
  volume={25},
  number={12},
  pages={2752--2766},
  year={2012},
  publisher={IEEE}
}

@inproceedings{13,
  title={The definition and computation of trajectory and subtrajectory similarity},
  author={Van Kreveld, Marc and Luo, Jun},
  booktitle={Proceedings of the 15th annual ACM international symposium on Advances in geographic information systems},
  pages={1--4},
  year={2007}
}

@inproceedings{14,
  title={Finding long and similar parts of trajectories},
  author={Buchin, Kevin and Buchin, Maike and Van Kreveld, Marc and Luo, Jun},
  booktitle={Proceedings of the 17th ACM SIGSPATIAL International Conference on Advances in Geographic Information Systems},
  pages={296--305},
  year={2009}
}

@article{15,
  title={Detecting commuting patterns by clustering subtrajectories},
  author={Buchin, Kevin and Buchin, Maike and Gudmundsson, Joachim and L{\"o}ffler, Maarten and Luo, Jun},
  journal={International Journal of Computational Geometry \& Applications},
  volume={21},
  number={03},
  pages={253--282},
  year={2011},
  publisher={World Scientific}
}

@inproceedings{16,
  title={Trajectory clustering: a partition-and-group framework},
  author={Lee, Jae-Gil and Han, Jiawei and Whang, Kyu-Young},
  booktitle={Proceedings of the 2007 ACM SIGMOD international conference on Management of data},
  pages={593--604},
  year={2007}
}

@article{17,
  title={Online clustering for trajectory data stream of moving objects},
  author={Yu, Yanwei and Wang, Qin and Wang, Xiaodong and Wang, Huan and He, Jie},
  journal={Computer science and information systems},
  volume={10},
  number={3},
  pages={1293--1317},
  year={2013}
}

@inproceedings{18,
  title={Stability of k-means clustering},
  author={Ben-David, Shai and P{\'a}l, D{\'a}vid and Simon, Hans Ulrich},
  booktitle={Learning Theory: 20th Annual Conference on Learning Theory, COLT 2007, San Diego, CA, USA; June 13-15, 2007. Proceedings 20},
  pages={20--34},
  year={2007},
  organization={Springer}
}

@article{19,
  title={Dealing with noise in crowdsourced GPS human trajectory logging data},
  author={Adhinugraha, Kiki and Rahayu, Wenny and Hara, Takahiro and Taniar, David},
  journal={Concurrency and Computation: Practice and Experience},
  volume={33},
  number={19},
  pages={e6139},
  year={2021},
  publisher={Wiley Online Library}
}

@article{20,
  title={Detecting anomalous trajectories and behavior patterns using hierarchical clustering from taxi GPS data},
  author={Wang, Yulong and Qin, Kun and Chen, Yixiang and Zhao, Pengxiang},
  journal={ISPRS International Journal of Geo-Information},
  volume={7},
  number={1},
  pages={25},
  year={2018},
  publisher={MDPI}
}

@article{21,
  title={Grid-Based Whole Trajectory Clustering in Road Networks Environment},
  author={Wang, Fangshu and Wang, Shuai and Niu, Xinzheng and Zhu, Jiahui and Chen, Ting},
  journal={Wireless Communications and Mobile Computing},
  volume={2021},
  pages={1--20},
  year={2021},
  publisher={Hindawi Limited}
}

@inproceedings{22,
  title={A density-based algorithm for discovering clusters in large spatial databases with noise.},
  author={Ester, Martin and Kriegel, Hans-Peter and Sander, J{\"o}rg and Xu, Xiaowei and others},
  booktitle={kdd},
  volume={96},
  number={34},
  pages={226--231},
  year={1996}
}

@book{23,
  title={Data mining: concepts and techniques},
  author={Han, Jiawei and Pei, Jian and Kamber, Micheline},
  year={2011},
  publisher={Elsevier}
}

@techreport{24,
  title={k-means++: The advantages of careful seeding},
  author={Arthur, David and Vassilvitskii, Sergei},
  year={2006},
  institution={Stanford}
}

@article{25,title={Least squares quantization in PCM},
  author={Lloyd, Stuart},
  journal={IEEE transactions on information theory},
  volume={28},
  number={2},
  pages={129--137},
  year={1982},
  publisher={IEEE}
}

@article{26,
  title={K-modes clustering},
  author={Chaturvedi, Anil and Green, Paul E and Caroll, J Douglas},
  journal={Journal of classification},
  volume={18},
  number={1},
  pages={35--55},
  year={2001},
  publisher={Springer}
}

@article{27,
  title={OPTICS: Ordering points to identify the clustering structure},
  author={Ankerst, Mihael and Breunig, Markus M and Kriegel, Hans-Peter and Sander, J{\"o}rg},
  journal={ACM Sigmod record},
  volume={28},
  number={2},
  pages={49--60},
  year={1999},
  publisher={ACM New York, NY, USA}
}

@article{28,
  title={BIRCH: an efficient data clustering method for very large databases},
  author={Zhang, Tian and Ramakrishnan, Raghu and Livny, Miron},
  journal={ACM sig
  record},
  volume={25},
  number={2},
  pages={103--114},
  year={1996},
  publisher={ACM New York, NY, USA}
}

@article{29,
  title={CURE: An efficient clustering algorithm for large databases},
  author={Guha, Sudipto and Rastogi, Rajeev and Shim, Kyuseok},
  journal={ACM Sigmod record},
  volume={27},
  number={2},
  pages={73--84},
  year={1998},
  publisher={ACM New York, NY, USA}
}

@inproceedings{30,
  title={STING: A statistical information grid approach to spatial data mining},
  author={Wang, Wei and Yang, Jiong and Muntz, Richard and others},
  booktitle={Vldb},
  volume={97},
  pages={186--195},
  year={1997},
  organization={Citeseer}
}

@inproceedings{31,
  title={Automatic subspace clustering of high dimensional data for data mining applications},
  author={Agrawal, Rakesh and Gehrke, Johannes and Gunopulos, Dimitrios and Raghavan, Prabhakar},
  booktitle={Proceedings of the 1998 ACM SIGMOD international conference on Management of data},
  pages={94--105},
  year={1998}
}

@article{32,
  title={Improved node localization using K-means clustering for Wireless Sensor Networks},
  author={El Khediri, Salim and Fakhet, Walid and Moulahi, Tarek and Khan, Rehanullah and Thaljaoui, Adel and Kachouri, Abdennaceur},
  journal={Computer Science Review},
  volume={37},
  pages={100284},
  year={2020},
  publisher={Elsevier}
}

@inproceedings{33,
  title={An application of dbscan clustering for flight anomaly detection during the approach phase},
  author={Sheridan, Kevin and Puranik, Tejas G and Mangortey, Eugene and Pinon-Fischer, Olivia J and Kirby, Michelle and Mavris, Dimitri N},
  booktitle={AIAA Scitech 2020 Forum},
  pages={1851},
  year={2020}
}

@article{34,
  title={Real-time superpixel segmentation by DBSCAN clustering algorithm},
  author={Shen, Jianbing and Hao, Xiaopeng and Liang, Zhiyuan and Liu, Yu and Wang, Wenguan and Shao, Ling},
  journal={IEEE transactions on image processing},
  volume={25},
  number={12},
  pages={5933--5942},
  year={2016},
  publisher={IEEE}
}

@article{35,
  title={Adaptive density trajectory cluster based on time and space distance},
  author={Liu, Fagui and Zhang, Zhijie},
  journal={Physica A: Statistical Mechanics and its Applications},
  volume={484},
  pages={41--56},
  year={2017},
  publisher={Elsevier}
}

@article{36,
  title={Online clustering of streaming trajectories},
  author={Mao, Jiali and Song, Qiuge and Jin, Cheqing and Zhang, Zhigang and Zhou, Aoying},
  journal={Frontiers of Computer Science},
  volume={12},
  pages={245--263},
  year={2018},
  publisher={Springer}
}

@article{37,
  title={Visual cluster analysis of trajectory data with interactive kohonen maps},
  author={Schreck, Tobias and Bernard, J{\"u}rgen and Von Landesberger, Tatiana and Kohlhammer, J{\"o}rn},
  journal={Information Visualization},
  volume={8},
  number={1},
  pages={14--29},
  year={2009},
  publisher={SAGE Publications Sage UK: London, England}
}

@inproceedings{38,
  title={Online clustering of trajectory data stream},
  author={Da Silva, Ticiana L Coelho and Zeitouni, Karine and de Mac{\^e}do, Jos{\'e} AF},
  booktitle={2016 17th IEEE International Conference on Mobile Data Management (MDM)},
  volume={1},
  pages={112--121},
  year={2016},
  organization={IEEE}
}

@article{41,
  title={Tracking clusters in evolving data streams over sliding windows},
  author={Zhou, Aoying and Cao, Feng and Qian, Weining and Jin, Cheqing},
  journal={Knowledge and Information Systems},
  volume={15},
  pages={181--214},
  year={2008},
  publisher={Springer}
}

@inproceedings{42,
  title={Challenges and issues in trajectory streams clustering upon a Sliding-Window Model},
  author={Mao, Jiali and Jin, Cheqing and Wang, Xiaoling and Zhou, Aoying},
  booktitle={2015 12th Web Information System and Application Conference (WISA)},
  pages={303--308},
  year={2015},
  organization={IEEE}
}

@inproceedings{43,
  title={TSCluWin: Trajectory stream clustering over sliding window},
  author={Mao, Jiali and Song, Qiuge and Jin, Cheqing and Zhang, Zhigang and Zhou, Aoying},
  booktitle={Database Systems for Advanced Applications: 21st International Conference, DASFAA 2016, Dallas, TX, USA, April 16-19, 2016, Proceedings, Part II 21},
  pages={133--148},
  year={2016},
  organization={Springer}
}

@article{44,
  title={Vehicle energy dataset (VED), a large-scale dataset for vehicle energy consumption research},
  author={Oh, Geunseob and Leblanc, David J and Peng, Huei},
  journal={IEEE Transactions on Intelligent Transportation Systems},
  volume={23},
  number={4},
  pages={3302--3312},
  year={2020},
  publisher={IEEE}
}

@article{45,
  title={A systematic approach to clustering whole trajectories of mobile objects in road networks},
  author={Han, Binh and Liu, Ling and Omiecinski, Edward},
  journal={IEEE Transactions on Knowledge and Data Engineering},
  volume={29},
  number={5},
  pages={936--949},
  year={2017},
  publisher={IEEE}
}

@article{46,
  title={A visual-numeric approach to clustering and anomaly detection for trajectory data},
  author={Kumar, Dheeraj and Bezdek, James C and Rajasegarar, Sutharshan and Leckie, Christopher and Palaniswami, Marimuthu},
  journal={The Visual Computer},
  volume={33},
  pages={265--281},
  year={2017},
  publisher={Springer}
}

@inproceedings{47,
  title={A stable clustering algorithm based on affinity propagation for VANETs},
  author={Shahwani, Hamayoun and Bui, Toan Duc and Jeong, Jaehoon Paul and Shin, Jitae},
  booktitle={2017 19th International Conference on Advanced Communication Technology (ICACT)},
  pages={501--504},
  year={2017},
  organization={IEEE}
}

@article{48,
  title={An efficient and distributed framework for real-time trajectory stream clustering},
  author={Gao, Yunjun and Fang, Ziquan and Xu, Jiachen and Gong, Shenghao and Shen, Chunhui and Chen, Lu},
  journal={IEEE Transactions on Knowledge and Data Engineering},
  year={2023},
  publisher={IEEE}
}

@article{50,
  title={Continuous trajectory similarity search for online outlier detection},
  author={Zhang, Dongxiang and Chang, Zhihao and Wu, Sai and Yuan, Ye and Tan, Kian-Lee and Chen, Gang},
  journal={IEEE Transactions on Knowledge and Data Engineering},
  volume={34},
  number={10},
  pages={4690--4704},
  year={2020},
  publisher={IEEE}
}

@article{54,
  title={A stable clustering algorithm using the traffic regularity of buses in urban VANET scenarios},
  author={Tseng, Hsueh-Wen and Wu, Ruei-Yu and Lo, Ching-Wen},
  journal={Wireless Networks},
  volume={26},
  pages={2665--2679},
  year={2020},
  publisher={Springer}
}

@inproceedings{57,
  title={Learning trajectory patterns by clustering: Experimental studies and comparative evaluation},
  author={Morris, Brendan and Trivedi, Mohan},
  booktitle={2009 IEEE Conference on Computer Vision and Pattern Recognition},
  pages={312--319},
  year={2009},
  organization={IEEE}
}

@article{58,
  title={A survey of trajectory distance measures and performance evaluation},
  author={Su, Han and Liu, Shuncheng and Zheng, Bolong and Zhou, Xiaofang and Zheng, Kai},
  journal={The VLDB Journal},
  volume={29},
  number={1},
  pages={3--32},
  year={2020},
  publisher={Springer}
}

@inproceedings{59,
  title={Trajectory clustering via deep representation learning},
  author={Yao, Di and Zhang, Chao and Zhu, Zhihua and Huang, Jianhui and Bi, Jingping},
  booktitle={2017 international joint conference on neural networks (IJCNN)},
  pages={3880--3887},
  year={2017},
  organization={IEEE}
}

@article{60,
  title={A deep trajectory clustering method based on sequence-to-sequence autoencoder model},
  author={Wang, Chao and Lyu, Fangzheng and Wu, Sensen and Wang, Yuanyuan and Xu, Liuchang and Zhang, Feng and Wang, Shaowen and Wang, Yongheng and Du, Zhenhong},
  journal={Transactions in GIS},
  volume={26},
  number={4},
  pages={1801--1820},
  year={2022},
  publisher={Wiley Online Library}
}

@article{61,
  title={A deep spatiotemporal trajectory representation learning framework for clustering},
  author={Wang, Chao and Huang, Jiahui and Wang, Yongheng and Lin, Zhengxuan and Jin, Xiongnan and Jin, Xing and Weng, Di and Wu, Yingcai},
  journal={IEEE Transactions on Intelligent Transportation Systems},
  volume={25},
  number={7},
  pages={7687--7700},
  year={2024},
  publisher={IEEE}
}

@article{62,
  title={Trajectory outlier detection: New problems and solutions for smart cities},
  author={Djenouri, Youcef and Djenouri, Djamel and Lin, Jerry Chun-Wei},
  journal={ACM Transactions on Knowledge Discovery from Data (TKDD)},
  volume={15},
  number={2},
  pages={1--28},
  year={2021},
  publisher={ACM New York, NY, USA}
}

@article{63,
  title={Clustering users by their mobility behavioral patterns},
  author={Ben-Gal, Irad and Weinstock, Shahar and Singer, Gonen and Bambos, Nicholas},
  journal={ACM Transactions on Knowledge Discovery from Data (TKDD)},
  volume={13},
  number={4},
  pages={1--28},
  year={2019},
  publisher={ACM New York, NY, USA}
}
\end{document}